\newif\ifdraft
\newif\iffull
\newif\ifoc % Single column?
\newcommandx{\wt}[2][1=]{\todo[linecolor=blue,
			backgroundcolor=blue!10,bordercolor=blue,#1]{{\bf Wei}: #2}}
\newcommandx{\jy}[2][1=]{\todo[linecolor=red,
			backgroundcolor=red!10,bordercolor=red,#1]{{\bf JJ}: #2}}
\def\wt#1{}
\def\jy#1{}
\newcommand{\customlabel}[2]{%
\protected@write \@auxout {}{\string \newlabel {#1}{{#2}{}}}}
\def\remark{\noindent\textbf{Remark}.\xspace}
\def\crp{{\tt {CRP}}\xspace}
\def\sat{{\tt {3-SAT}}\xspace}
\def\mpsat{{\tt {MPSAT}}\xspace}
\def\namo{{\tt {NAMO}}\xspace}
\def\tamp{{\tt {TAMP}}\xspace}
\def\rrts{RRT$^*$\xspace}
\def\W{\mathcal W}
\def\pl{\prec^{\mathrm{L}}}
\def\pr{\prec^{\mathrm{R}}}
\def\sl{\succ^{\mathrm{L}}}
\def\sr{\succ^{\mathrm{R}}}
\font\titlefont=ptmb at 17.2pt
\title{{\titlefont Taming Combinatorial Challenges in Optimal Clutter Removal Tasks}}
\author{Wei N. Tang \qquad Jingjin Yu}
\institute{Department of Computer Science, Rutgers University \\ 
\{wei.tang, jingjin.yu\}@rutgers.edu}
\begin{document}
\maketitle
\iffull\else\vspace*{-6mm}\fi
\begin{abstract}
We examine an important combinatorial challenge in clearing clutter 
using a mobile robot equipped with a manipulator, seeking to compute 
an optimal object removal sequence for minimizing the task completion 
time, assuming that each object is grasped once and then subsequently 
removed. On the structural side, we establish that such an optimal 
sequence can be NP-hard to compute, even when no two objects to be 
removed have any overlap. Then, we construct asymptotically optimal 
and heuristic algorithms for clutter removal. Employing dynamic 
programming, our optimal algorithm scales to 40 objects. On the other 
hand, for random clutter, fast greedy algorithms tend to produce 
solutions comparable to these generated by the optimal algorithm. 
\end{abstract}

\iffull\else\vspace*{-8mm}\fi
\section{Introduction}\label{section:introduction}
\iffull\else\vspace*{-2mm}\fi
We investigate the challenge of clearing clutter with a mobile robot, 
as an initial step toward the autonomous execution of clean-up tasks, 
e.g., the handling of the aftermath of earthquakes in urban environments
or the tidying up of the daily mess in a kid's room. Specifically, the 
study focuses on the task and motion planning (\tamp) for removing scattered 
cuboid-like objects with known poses, in a bounded 2D region with exit(s),
where each object is grasped once and subsequently removed. We call this
the clutter removal problem (\crp) and our main goal is to design 
effective algorithms for computing high quality object removal sequences 
for minimizing the overall task completion time. A typical setting 
examined in this paper is illustrated in Fig.~\ref{fig:scene}. 

Due to the extremely high complexity of clutter removal as a general 
\tamp task, we explicitly note that the current work has a limited 
scope on the stated combinatorial challenge and does not consider 
other important issues such as uncertainties rising from perception 
or non-prehensile manipulation. Nevertheless, the addressed problem 
remains relevant when other factors are considered; therefore, the 
results we provided in this paper has general applicability. 
Reasoning about the inherent constraints associated with the challenge 
including objects' shapes, poses (location and orientation), and their 
relative placement with respect to each other, we are able to establish 
that finding optimal plans for \crp is an intractable task, even when 
objects assume a planar setting. 
%when there are two or more exits. 
%
On the algorithmic side, first, for the single-exit case, we develop an 
backtracking-based asymptotically optimal algorithm for solving \crp, 
capable of handling 40 objects, which is fairly practical. Then, multiple 
sub-optimal, best-first type algorithms are developed that perform very 
well under practical settings. Building on the single-exit solution, we 
further develop Voronoi-based algorithms for the case of multiple exits
that achieve both high solution quality and decent computational 
efficiency. 

\begin{figure}[ht!]
%\vspace*{-5mm}
\begin{center}
%\resizex{
\begin{overpic}[width=4in,tics=5]{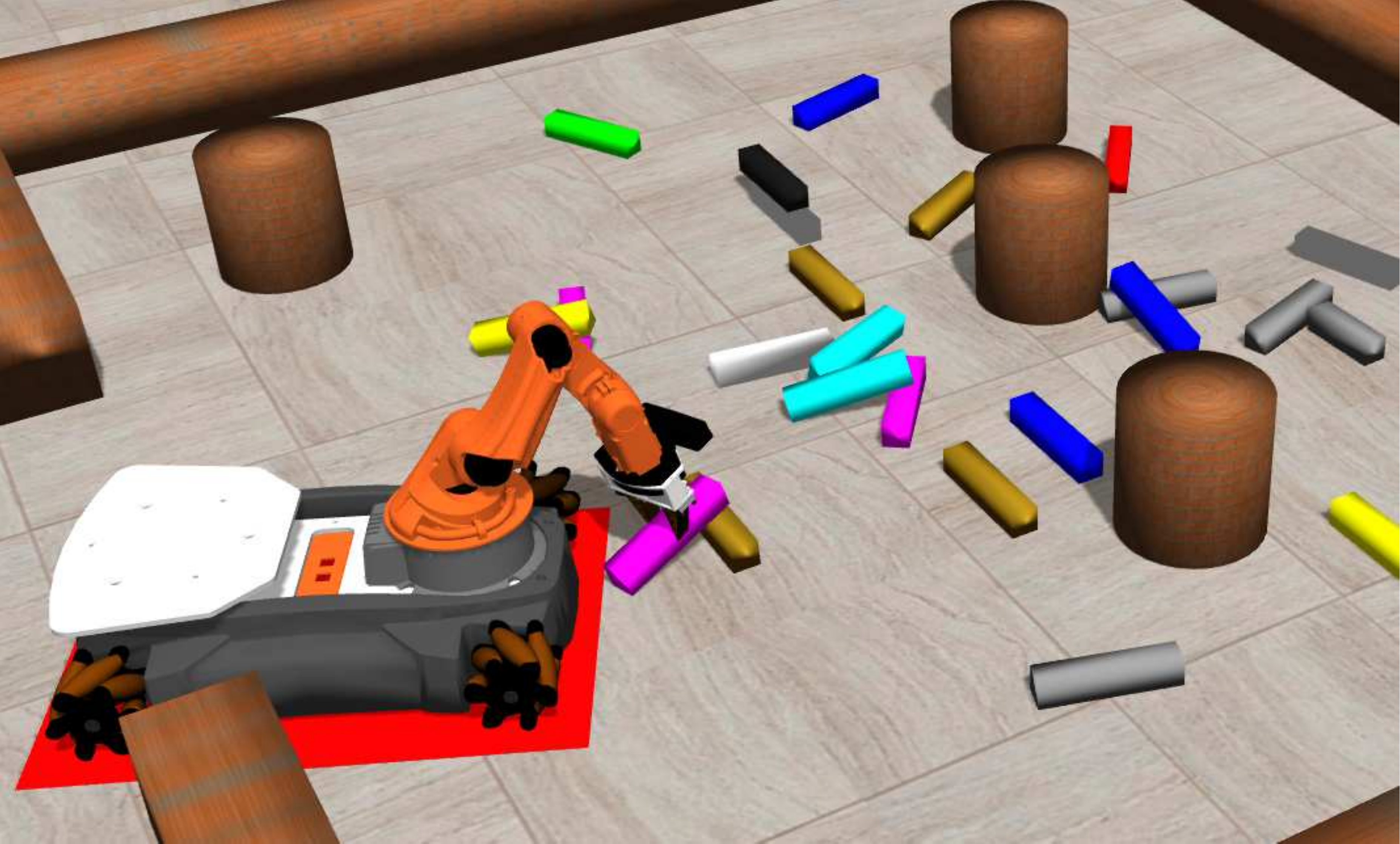}
\end{overpic}
%}
\end{center}
\iffull\else\vspace*{-5mm}\fi
\caption{\label{fig:scene} A clutter removal scenario 
addressed in this work, where a KUKA youBot is tasked to grasp and
remove all scattered objects in clutter, one at a time, in a room with 
four static obstacles.}
\iffull\else\vspace*{-7mm}\fi
\end{figure} 

Our study is mainly motivated by the need and potential of deploying 
autonomous robots in disaster response scenarios \cite{pratt2013darpa,
murphy2014disaster}. The realization of this goal demands the efficient 
resolution of \tamp challenges \cite{plaku2010sampling,garrett2015ffrob,
dantam2016incremental}. These \tamp challenges, involving both discrete 
combinatorial reasoning and (continuous) motion planning, can often be 
notoriously hard to solve. For example, a class of problems related to 
this work, {\em Navigation among Movable Obstacles} (\namo), are known 
to be computationally intractable in many forms \cite{wilfong1991motion,
nieuwenhuisen2008effective}. Nevertheless, practical algorithms have 
been proposed that effectively solve the {\em monotone} case (i.e., where 
a solution exists that requires moving each obstacle once) via standard 
backtracking techniques \cite{stilman2008planning}. Probabilistically 
complete solutions for general settings have also been proposed 
\cite{van2009path}. The current study emphasizes optimality issues in 
\crp as a \tamp problem similar to \cite{HanStiKroBekYu18IJRR,
vega2016asymptotically}. This contrasts studies with integrated \tamp 
solutions, e.g., \cite{plaku2010sampling,garrett2015ffrob,
dantam2016incremental} which do not provide optimality assurances. 

{\em Object Rearrangement} is another related problem class.
%that is related to clutter removal. 
Some results in this area, e.g., \cite{ota2009rearrangement,
havur2014geometric}, can be viewed as variations of \namo. Whereas a 
search based approach is used in \cite{ota2009rearrangement}, symbolic 
reasoning is applied in \cite{havur2014geometric} which appears to be 
more general. In contrast, \cite{krontiris2015dealing,HanStiKroBekYu18IJRR} 
put more emphasis on taming the combinatorial explosion caused by the sheer 
number of objects involved, with \cite{HanStiKroBekYu18IJRR} further 
computing (near) optimal solutions under a metric considering both grasping 
costs and end-effector travel costs. 

Clutter removal is also intimately linked to {\em (dis)assembly}, 
where multiple parts need to be put together to yield a product, 
e.g., \cite{natarajan1988planning,lozano1993assembly}. The (dis)assembly 
problem is hard in general \cite{kavraki1993complexity} and remains so even 
if the parts are put together two at a time \cite{kavraki1995partitioning}. 
From the algorithmic perspective, planning of (dis)assembly algorithms is 
studied in \cite{wilson1994geometric}, which also proposed measures for 
evaluating the complexity of the resulting algorithms. Subsequently, a more 
general motion space approach was developed \cite{halperin2000general}, 
which proposed a block graph abstraction for representing dependency between 
components.

The main contributions of the work are three-fold. 
First, we show that computing an optimal sequence for \crp, as 
a fundamental discrete combinatorial \tamp challenge independent of the 
geometrical grasp and motion planning components, is NP-hard. 
Second, we develop a sampling-based, asymptotically optimal algorithm 
for \crp. Employing dynamic programming and other techniques, the 
algorithm is capable of handling up to 40 objects given limited computation 
time. This is significant because there are $n!$ possible sequences to 
consider for $n$ objects. 
Third, we continue to develop fast best-first type algorithms that are 
empirically shown to compute near-optimal solutions under randomized 
settings to be expected in real-world scenarios. 

The rest of the manuscript is organized as follows. 
Section~\ref{section:problem} describes the clutter removal problem (\crp)  
studied in this paper. 
Section~\ref{section:analysis} then provides some initial analysis regarding 
feasibility and completeness, and outlines the general algorithmic solution.
Section~\ref{section:complexity} shows that the combinatorial elements of 
optimal clutter removal is NP-hard even when the objects do not overlap, 
i.e., in a planar setting. 
Section~\ref{section:algorithm} proceeds to develop resolution-complete 
asymptotically optimal algorithms as well as fast best-first algorithms, 
and shows that there are cases where greedy algorithms yield rather 
sub-optimal solutions. 
Section~\ref{section:evaluation} evaluates the performance of the proposed 
algorithms on computation effort and solution optimality. 
Section~\ref{section:conclusion} concludes the work.
%Finally, Section~\ref{section:conclusion} summarizes the work and discusses 
%possible next steps in the form of future work and open questions. 

%Main contributions: 
%\begin{itemize}
%\item Formulate robotic de-cluttering as a task and motion planning problem (with multi-exits) and show its hardness
%\item For a single exit, provide exhaustive and effective (greedy) algorithms for solving it. To improve the efficiency, maintain an incremental RRT* structure. 
%\item For multi-exits, we can generalize the single exit algorithms or add another layer of Voronoi partitions. We can probably prove some asymptotic guarantees for simple problems here. 
%\end{itemize}
%
%\textcolor{red}{
%For the paper, on the algorithm side we can focus on: 
%\begin{itemize}
%\item Single-exit exhaustive and greedy algorithms with additional heuristics. 
%\item Voronoi-partition based heuristics for multi-exits and for each Voronoi region and then apply the single-exit algorithms. We can probably prove some guarantees. 
%\end{itemize}
%}

\iffull\else\vspace*{-3mm}\fi
\section{The Clutter Removal Problem}\label{section:problem}
\iffull\else\vspace*{-3mm}\fi
Consider the setting in which $n$ rigid objects $O = \{o_1, \ldots, o_n\}$ 
are scattered on the ground of a bounded 3D workspace, with $o_i$ 
representing the known pose (i.e., location and orientation) 
of the $i$-th object, $1 \le i \le n$. Let $\W \subset \mathbb R^2$ be the 
ground plane of the workspace, which may also contain static obstacles, 
i.e., inaccessible regions. Let $\partial \W$ be the boundary of $\W$. The 
workspace can be accessed through {\em exits} along $\partial \W$ by a 
mobile robot capable of grasping and transporting objects, one at a time. 
The task in a clutter removal problem (\crp) is to remove all objects 
from the workspace. An object is considered cleared after it is carried by 
the robot outside an exit. Initially, the mobile robot starts at a specific 
exit. The robot may travel between exits along $\partial \W$. 
We note that, due to inherent limitations of mobile robots and the 
placement of the objects (e.g., an object $o_j$ close to another object 
$o_i$ may prevent the robot from successfully grasping $o_i$), some 
objects may be inaccessible to the robot at any given time. 
Fig.~\ref{fig:cr2d} illustrates the top view of a problem instance 
with static obstacles and three exits. 

\begin{figure}[ht!]
\iffull\else\vspace*{-4mm}\fi
\begin{center}
%\resizex{
\begin{overpic}[width=4in,tics=5]{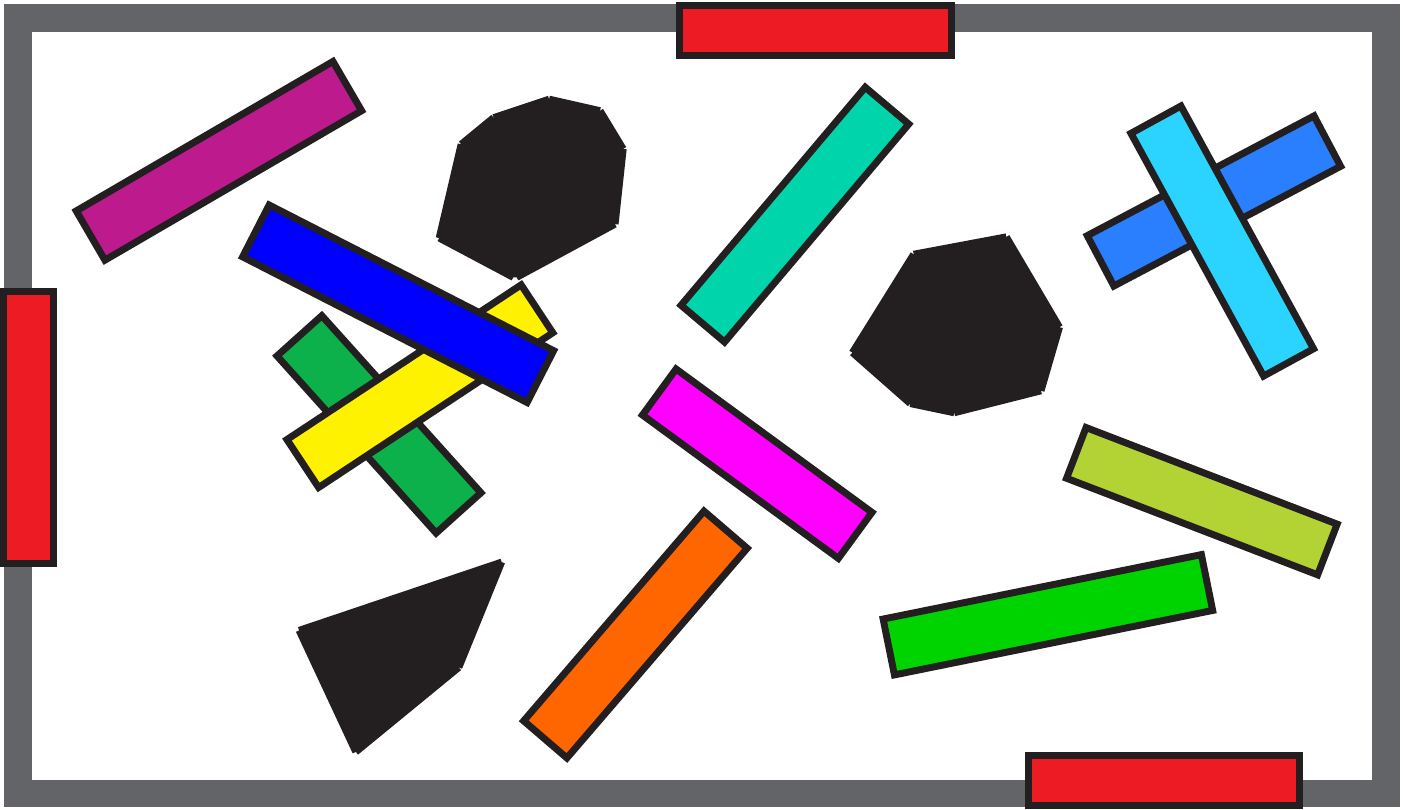}
%\put(2,18.8){{\small $o_1$}}
\end{overpic}
%}
\end{center}
\iffull\else\vspace*{-4mm}\fi
\caption{\label{fig:cr2d} An example \crp in 2D (viewed from above) where 
$\W$ is the region within the large rectangle. Three exits are marked red 
on $\partial \W$. The black polygons inside $\W$ are static obstacles (in 
additional to $\partial \W$). 
The rest of the objects, each simplified as a rectangle, are to be 
removed.}
\iffull\else\vspace*{-6mm}\fi
\end{figure} 

Focusing on the combinatorial problem of computing the optimal clutter 
removal sequence and given the extreme complexity of the general \tamp 
problem, this work does not consider non-prehensile manipulation, grasping 
failures, object pose uncertainty, or multiple grasps per object. 
Under these assumptions, we work with cuboid-like objects with known 
poses, for which we may assume that it takes the same amount of time 
to grasp an object at different poses. Subsequently, solving the \crp 
formulation optimally reduces to computing a clutter removal sequence 
to minimize the travel time of the robot. This remains highly challenging 
because: {\em (i)} grasp planning and robot base motion planning must be 
performed continuously to reason about object accessibility and how they 
can be removed, and {\em (ii)} given $n$ objects, there are $n!$ possible 
sequences with which they may be removed; any optimal solution must 
consider every one of these removal sequences during its computation
phase.

\iffull\else\vspace*{-3mm}\fi
\section{Preliminary Structural Analysis and Algorithm Design}\label{section:analysis}
\iffull\else\vspace*{-3mm}\fi
\subsection{Feasibility and Completeness}
\iffull\else\vspace*{-3mm}\fi
To solve a \crp instance and obtain an object removal sequence, one 
must first identify at any point the current set of graspable objects. 
Then, one of the accessible objects must 
be removed and the process repeats. One of the first issue here is 
whether an algorithm we design needs to be careful so that an initially 
feasible problem is not made infeasible. We make the observation that 
when non-prehensile manipulation is not considered, a feasible \crp 
instance will remain feasible regardless of the object removal order. 

\begin{proposition}\label{p:complete}
\iffull\else\vspace*{-1mm}\fi
Adopting a proper (resolution-)complete motion planning algorithm, the 
clutter removal problem, in the absence of interactions among objects, 
can be solved with (resolution) completeness guarantees.
\iffull\else\vspace*{-1mm}\fi
\end{proposition}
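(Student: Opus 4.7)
The plan is to reduce the statement to a single monotonicity observation about how the robot's free space evolves as objects are removed, and then to defer the continuous part of the argument to the chosen (resolution-)complete motion planner.

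First, I would formalize what \emph{accessibility} of an object means in this setting: object $o_i$ is accessible in a state with remaining objects $X \subseteq O$ if there exists a collision-free robot configuration that realizes a valid grasp of $o_i$, reachable from some exit through the free portion of $\W$ cluttered by $X \setminus \{o_i\}$ and the static obstacles. From the hypothesis that objects do not interact (no stacking, no non-prehensile pushing, no perceptual uncertainty causing new dependencies), removing an object can only \emph{subtract} obstacles from the robot's workspace. Consequently, if $X' \subseteq X$ and $o_i \in X'$, then accessibility of $o_i$ when $X$ remains implies accessibility of $o_i$ when $X'$ remains. This is the monotonicity lemma that powers the rest of the argument.

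Next, I would combine monotonicity with the feasibility assumption to show that a greedy ``remove any accessible object'' strategy can never paint itself into a corner. Suppose $\pi = (o_{i_1}, \ldots, o_{i_n})$ is a feasible removal sequence, and suppose at some point an arbitrary subset $S \subseteq O$ has been removed in some (possibly different) order. Let $j$ be the smallest index with $o_{i_j} \notin S$; by minimality of $j$, we have $\{o_{i_1}, \ldots, o_{i_{j-1}}\} \subseteq S$. Along the feasible sequence $\pi$, the object $o_{i_j}$ was accessible after exactly the first $j-1$ removals, i.e., when the remaining set was $O \setminus \{o_{i_1}, \ldots, o_{i_{j-1}}\}$. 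Since the current remaining set $O \setminus S$ is a subset of that, the monotonicity lemma gives that $o_{i_j}$ is accessible now. So whenever some objects are still present, at least one accessible object exists, and any algorithm that repeatedly removes some accessible object terminates after $n$ rounds.

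The motion-planning side of the statement is then handled by plugging in a (resolution-)complete planner as a black box. Testing accessibility of a single $o_i$ reduces to a standard reachability/grasp-feasibility query in a configuration space with known static and dynamic obstacles; a (resolution-)complete planner answers this with the corresponding guarantee. Scanning the at-most-$n$ remaining objects preserves completeness (a finite disjunction of (resolution-)complete predicates is (resolution-)complete), so the combinatorial loop is well defined and inherits the planner's guarantee.

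I do not expect a serious obstacle in the core argument; the monotonicity lemma is almost immediate once non-prehensile interactions are excluded, and the inductive ``at least one accessible object remains'' statement follows cleanly. The only point requiring some care is the interface between the discrete combinatorial layer and the continuous motion-planning layer: in the purely resolution-complete case, one must confirm that the resolution at which accessibility is decided is consistent across queries and that the grasp pose used in the accessibility test is the same one fed to the transport planner, so that success of the accessibility query implies a realizable removal action.
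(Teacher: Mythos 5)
Your proposal is correct and follows essentially the same route as the paper: the key observation in both is that, absent non-prehensile interaction, removing an object only enlarges the robot's free space, so accessibility is monotone and any feasible instance stays feasible under an arbitrary removal order, with the continuous side delegated to a (resolution-)complete planner. Your minimal-index induction merely makes explicit the step the paper asserts in one line ("a feasible solution exists regardless of the actual object removal order"), which is a welcome tightening but not a different argument.
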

\begin{proof}
No explicit requirement is placed on the feasibility of a \crp instance. 
However, we note that, to be able to remove all objects sequentially, 
there must exist at least one ordering of the $n$ objects with which they 
{\em can} be removed one by one. If such an order does exist, since 
non-prehensile manipulation is not considered in this study, i.e., 
grasping an object will not make another object accessible to the 
robot become inaccessible, this implies the existence of a feasible 
solution for removing objects regardless of the actual object removal 
order. Therefore, if the initial problem admits a solution, then, at 
any stage, some object can be removed from the workspace. Subsequently, 
using a complete \cite{canny1988complexity} (resp., resolution-complete 
\cite{kavraki1994probabilistic,lavalle2001randomized,karaman2011sampling}) 
motion planning algorithm can guarantee the completeness (resp., 
resolution completeness).~\qed
\end{proof}

\iffull\else\vspace*{-4mm}\fi
\subsection{Algorithm Structure and Common Routines}\label{sec:alg-comp}
\iffull\else\vspace*{-2mm}\fi
With Proposition~\ref{p:complete} characterizing the feasibility and 
completeness for clutter removal, we shift the attention to algorithm 
design. Since the objects must be grasped and removed one by one, we 
need subroutines for computing the current set of graspable objects
and the shortest distance to reach these objects. 

\iffull\else\vspace*{-3mm}\fi
\subsubsection{Motion planning for the robot base.} Motion planning for 
the mobile robot (base) is carried out for two purposes: to compute 
optimal trajectories and to identify objects within the robot's 
reach. In this work, these are achieved using a variant of the RRT$^*$ 
algorithm \cite{karaman2011sampling}, augmented with an updating 
heuristic proposed as part of RRT$^X$ \cite{otte2015mathrm}.

The cluttered objects (with known poses) are projected over $\W$ (recall 
that $\W \subset \mathbb R^2$ is the 2D projection of the 3D work 
space onto the ground plane). Treating the projection and the workspace 
boundary (i.e. $\partial \W$ without the exits) as obstacles, one or 
more \rrts structures can be computed. An illustration of the \rrts 
structure (for a single exit) before and after an object removal is 
shown in Fig.~\ref{fig:rrts}. In the \rrts update, the idea of {\em 
cascade rewiring} with a larger radius \cite{otte2015mathrm} is adopted, 
which use all relevant existing \rrts samples and add new ones only in 
the area of the newly removed object. 

\begin{figure}[ht!]
\begin{center}
%\resizex{
\begin{overpic}[width=4in,tics=5]{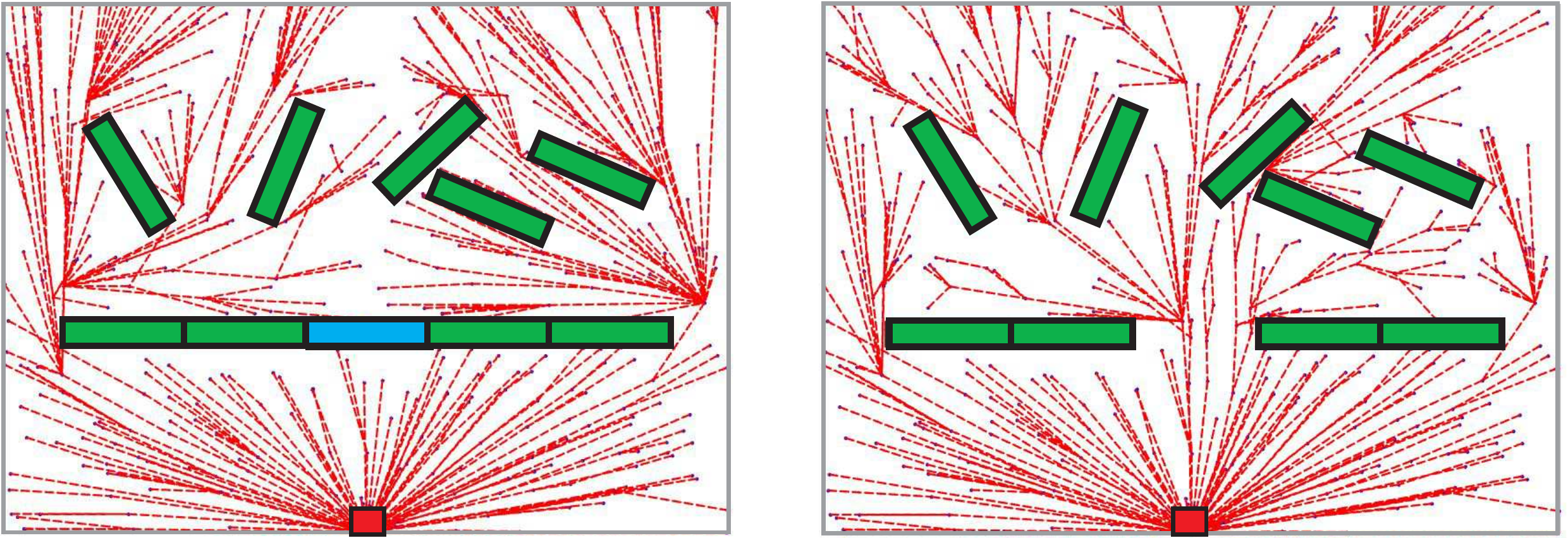}
\end{overpic}
%}
\end{center}
\iffull\else\vspace*{-4mm}\fi
\caption{\label{fig:rrts} An illustration of the maintenance of the \rrts 
structure before and after the removal of an object (the cyan one on the 
left). The small red rectangles at the bottom indicate the (single) 
exit, the green rectangles with black borders are the rest of the objects 
to be removed.}
\iffull\else\vspace*{-6mm}\fi
\end{figure} 

We mention here that for symmetric omnidirectional robot, it is also 
possible to use the {\em visibility graph} \cite{lozano1979algorithm} 
to compute optimal trajectories when a polygonal approximation of the 
2D projection can be obtained, which can be much faster. 

\begin{wrapfigure}[6]{r}{1.25in}
\iffull\vspace*{-7mm}\else\vspace*{-8mm}\fi
\begin{overpic}[width=1.2in,tics=5]{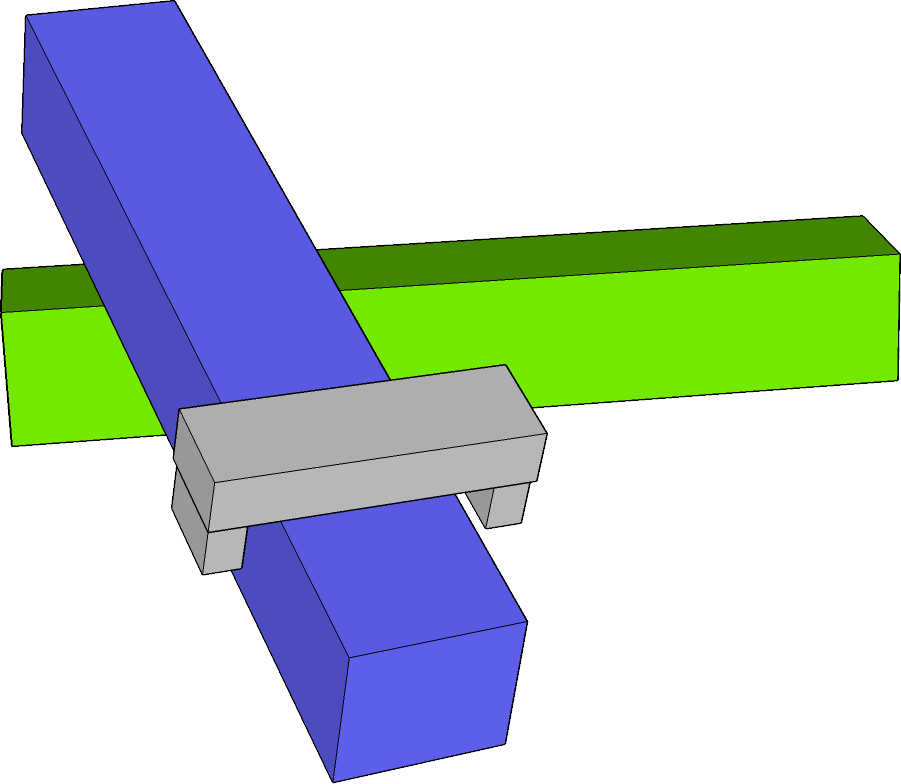}
\end{overpic}
%\vspace*{-6mm}
%\caption{\label{fig:grasp} Two objects and a potential grasp for one of
%the objects.}
\end{wrapfigure}
\iffull\else\vspace*{-3mm}\fi
\subsubsection{Grasp planning.} With the \rrts computed for the current
environment, reachable objects can be identified. For all these objects, 
a grasp planner is invoked to compute potential grasps. Per the assumption 
that the objects to be removed are cuboid-like, a relatively basic 
grasp planner is applied: for each potential object, the planner first finds
a top face (i.e., one with surface normals pointing up) and samples 
the normals for possible grasps by a 2-finger gripper. In the figure on the
right, the gray block illustrates a possible sampled grasp for an accessible 
object identified by the planner. 

\iffull\else\vspace*{-3mm}\fi
\subsubsection{General algorithm structure.}
Based on the sampled grasps returned by the grasp planner and the \rrts, 
travel distance costs for reaching the grasp by the robot can be computed 
accordingly. We note that as we increase the resolution of the two 
sampling process, the costs that are computed will be asymptotically 
optimal. With these costs, what is left is the computation of an object 
removal sequence. As such, all the algorithms proposed in this paper 
share common grasp planning and robot-base motion planning subroutines, 
and differ on how they use the information returned by the subroutines 
to compute the object removal sequence, where there are up to $n!$ choices.

\iffull\else\vspace*{-3mm}\fi
\section{Hardness of Optimal Clutter Removal}\label{section:complexity}
\iffull\else\vspace*{-3mm}\fi
\iffull
Before constructing full algorithms for \crp, we establish that 
computing the optimal object removal sequence is computationally 
intractable, even when objects to be removed do not overlap, i.e., 
the setting is {\em planar}. In this section, NP-hard is shown for 
cases with two or more exits. The proof for the single exit case 
is significantly more involved and is provided in the Appendix.
%
%Though this work does not focus on collaboration between multiple robots, 
%mulit-robot \ocr remains NP-hard via reductions from single-robot \ocr. 
\else
Before constructing full algorithms for \crp, we establish that 
computing the optimal object removal sequence is computationally 
intractable, even when objects to be removed do not overlap, i.e., 
the setting is {\em planar}. In this section, NP-hard is shown for 
cases with two or more exits. The proof for the single exit case 
is significantly more involved and is provided as supplementary 
material.
% \cite{TanYuISRR19EXT}.
%
%Though this work does not focus on collaboration between multiple robots, 
%mulit-robot \ocr remains NP-hard via reductions from single-robot \ocr. 
\fi

Our proof of the hardness result is via a reduction from {\em monotone 
planar} \sat \cite{de2010optimal}, with the help of some special {\em 
gadgets}. In the construction, we assume that the robot is omnidirectional
and powerful enough to grasp and transport large objects 
providing that the object has suitable graspable {\em handles}. 

\subsection{Monotone Planar \sat}\label{subsection:mpsat}
\iffull\else\vspace*{-3mm}\fi
Monotone planar \sat (\mpsat) is a variation of \sat 
\cite{garey2002computers} with three additional restrictions: 
{\em (i)} each clause contains exclusively positive literals or 
exclusively negative literals, {\em (ii)} the graph connecting clauses 
to literals has a planar embedding, and {\em (iii)} the planar 
embedding can be arranged such that positive clauses and negative 
clauses reside on two sides of a line connecting all the variables. 
As an illustration, Fig.~\ref{fig:mpsat} provides a planar embedding for 
the \mpsat instance with variables $x_1$--$x_5$ and clauses $c_1 = x_1\vee 
x_2, c_2 = x_1 \vee x_3 \vee x_4, c_3 = x_1 \vee x_4 \vee x_5, c_4 = \neg 
x_1 \vee \neg x_2 \vee \neg x_3$, and $c_5 = \neg x_3 \vee \neg x_4 \vee 
\neg x_5$. We will be using this example for illustrating the NP-hardness 
reduction to planar optimal clutter removal. 

\begin{figure}[ht!]
\iffull\else\vspace*{-4mm}\fi
\begin{center}
%\resizex{
\begin{overpic}[width=3in,tics=5]{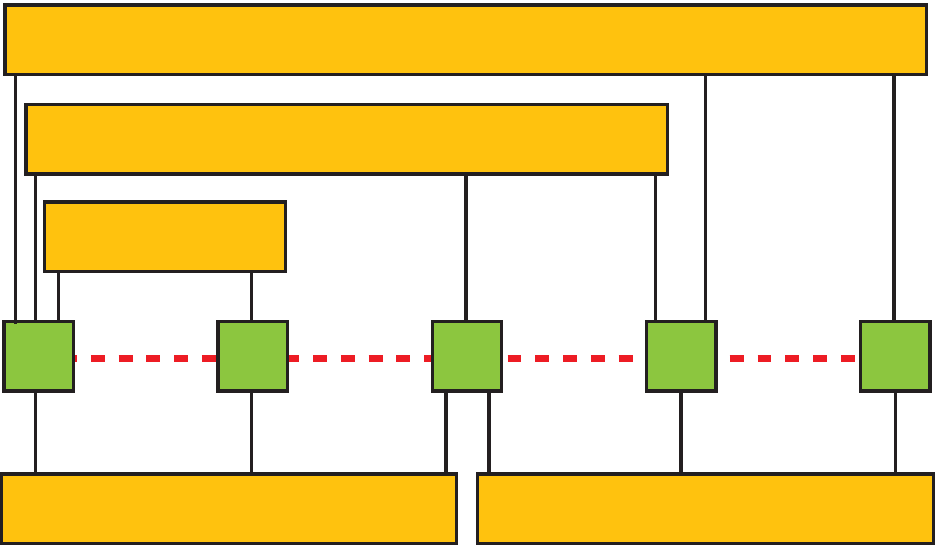}
\put(2,18.8){{\small $x_1$}}
\put(24.5,18.8){{\small $x_2$}}
\put(47,18.8){{\small $x_3$}}
\put(70.5,18.8){{\small $x_4$}}
\put(93,18.8){{\small $x_5$}}
\put(34,52.5){{\small $c_3: x_1 \vee x_4 \vee x_5$}}
\put(20,42){{\small $c_2: x_1 \vee x_3 \vee x_4$}}
\put(7.2,32){{\small $c_1: x_1 \vee x_2$}}
\put(4,2.5){{\small $c_4:\neg x_1 \vee \neg x_2 \vee \neg x_3$}}
\put(55,2.5){{\small $c_5:\neg x_3 \vee \neg x_4 \vee \neg x_5$}}
\end{overpic}
%}
\end{center}
\iffull\else\vspace*{-4mm}\fi
\caption{\label{fig:mpsat}The planar embedding of the \mpsat instance with 
variables $x_1$--$x_5$ and clauses $c_1 = x_1\vee x_2, c_2 = x_1 \vee x_3 
\vee x_4, c_3 = x_1 \vee x_4 \vee x_5, c_4 = \neg x_1 \vee \neg x_2 \vee 
\neg x_3$, and $c_5 = \neg x_3 \vee \neg x_4 \vee \neg x_5$.}
\iffull\else\vspace*{-8mm}\fi
\end{figure} 

\iffull\else\vspace*{-2mm}\fi
\subsection{The Variable Gadget}
\iffull\else\vspace*{-2mm}\fi
For each variable in a given \mpsat instance, we build a gadget for it; 
Fig.~\ref{fig:gadgets}(a) illustrates such a gadget for variable $x_1$. 
The boundaries of individual objects are marked with black lines. Here, 
the green object $o_{g}$, representing assigning $x_1$ to be true, can 
be lifted at either its left most part or its lowest part (when fully 
exposed), as indicated by the red arrows. The orange object $o_o$ mirrors 
$o_{g}$ and represents assigning $x_1$ to false. After either $o_{g}$ 
or $o_o$ is removed, the top purple object $o_p$ can be removed. 
The partially shown (three) lime objects (call these $o_{l1}, o_{l2}, 
o_{l3}$) and (one) yellow object $o_y$ are long rectangles representing 
connections between the variable gadget and clause gadgets (to be 
detailed soon in Section~\ref{subsection:clause-gadget}), corresponding 
to the vertical lines shown in Fig.~\ref{fig:mpsat}. In this particular 
case, they are connected to the clause gadgets for $c_1$--$c_4$. 

\begin{figure}[ht!]
\begin{center}
%\resizex{
\iffull\else\vspace*{-5mm}\fi
\begin{tabular}{cc}
\begin{overpic}[width=3.5in,tics=5]{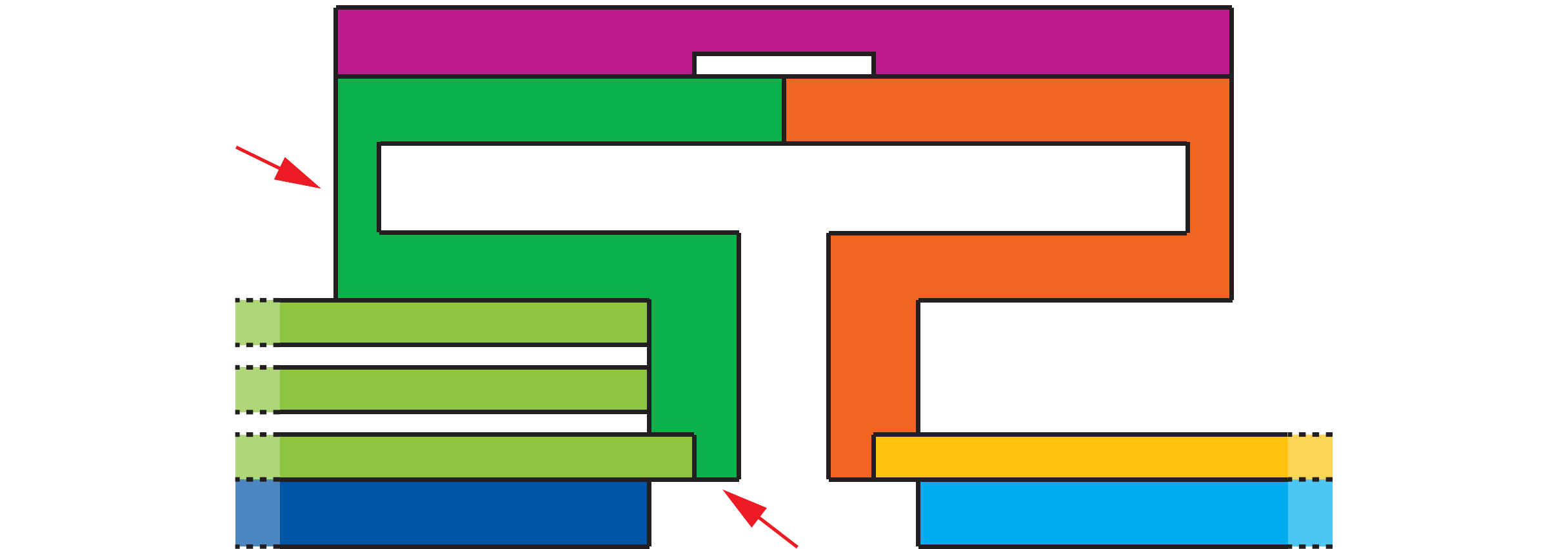}
\put(28,22.5){{\small $o_{g}$}}
\put(68,22.5){{\small $o_{o}$}}
\put(81,31){{\small $o_{p}$}}
\put(-2,13.5){{\small $c_1 \leftarrow o_{l1}$}}
\put(-2,9){{\small $c_2 \leftarrow o_{l2}$}}
\put(-2,4.5){{\small $c_3 \leftarrow o_{l3}$}}
\put(9,0.5){{\small $o_b$}}
\put(86,4.5){{\small $o_y \rightarrow c_4$}}
\put(86,0.5){{\small $o_c$}}
\put(48,-6){{\small (a)}}
\end{overpic} &
\begin{overpic}[width=2.8in,tics=5]{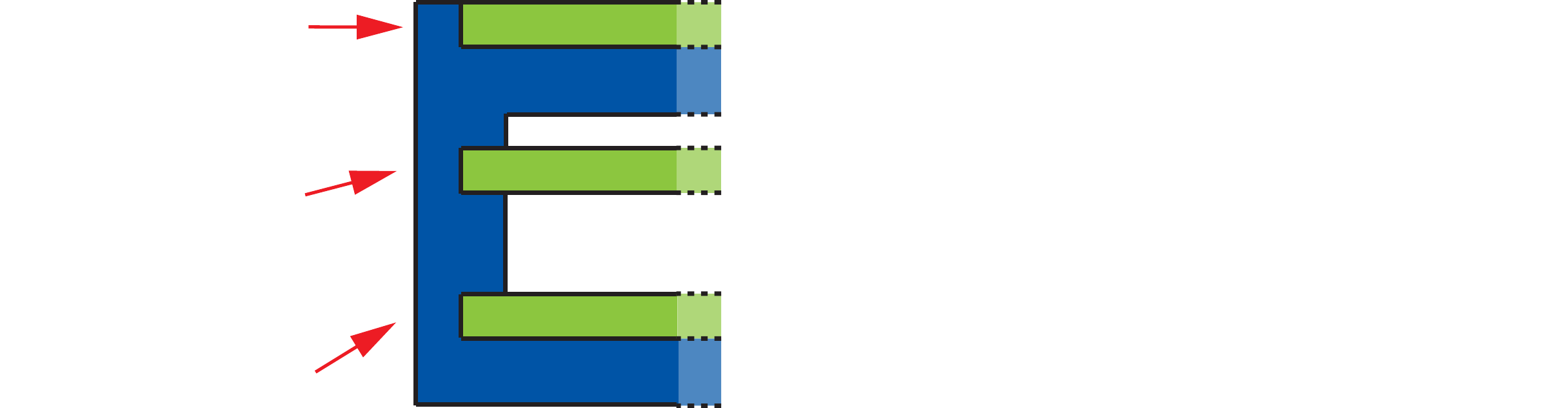}
\put(50,23){{\small $\rightarrow x_5$}}
\put(50,14){{\small $\rightarrow x_4$}}
\put(50,4){{\small $\rightarrow x_1$}}
\put(40,-6){{\small (b)}}
\end{overpic} 
\end{tabular}
%}
\end{center}
\iffull\else\vspace*{-3mm}\fi
\caption{\label{fig:gadgets} (a) The variable gadget for $x_1$, which 
appears positively in $c_1$--$c_3$ and negatively in $c_4$. (b) The 
clause gadget for $c_3$, which connects the variable gadgets for $x_1$, 
$x_4$, and $x_5$. Notice that the figures are not drawn to scale.}
\iffull\else\vspace*{-7mm}\fi
\end{figure} 

On the positive side, once $o_{g}$ is removed, in addition to being able
to remove $o_{p}$, $o_{l1}$--$o_{l3}$ can be removed by grasping them 
from their rightmost locations. The blue, partially shown object $o_b$ 
belongs to the clause gadget for $c_3$. Alternatively, if $o_b$ is 
removed, then $o_{l3}$ can be removed from the right side. Following 
this, $o_g$ can be lifted at its lowest point and removed. $o_{l1}$ and 
$o_{l2}$ can be subsequently removed as well. On the negative side, 
because $x_1$ only appears negatively in $c_4$, there is a single 
yellow object $o_y$ connecting $o_o$ to the gadget for $c_4$, which 
contains the cyan object $o_c$. We note that the figure is not drawn to 
scale. The horizontal span (i.e., the width) of the gadget is much 
larger than its vertical span. This will be quantified later. 

\iffull\else\vspace*{-3mm}\fi
\subsection{The Clause Gadget}\label{subsection:clause-gadget}
\iffull\else\vspace*{-2mm}\fi
The clause gadget is fairly simple and the construction for the clause 
$c_3 = x_1 \vee x_4 \vee x_5$ is shown in Fig.~\ref{fig:gadgets}(b). If 
any of the lime objects are removed, then the blue object will have an 
exposed thin handle (marked by the red arrows) that can be used for 
lifting and removing the (blue) clause object. A clause gadget will have 
an {\em extension} piece for a connecting variable if the rectangle (e.g., 
the lime piece) connecting the clause gadget and the variable gadget is 
the lowest one on the variable gadget side. In this example, the clause 
gadget for $c_3$ has extensions for $x_1$ and $x_5$ (the long horizontal 
blue extrusions). Comparing with Fig.~\ref{fig:mpsat}, if a connection 
between a clause and a variable is the leftmost one for the variable, 
then there is an extension piece for the gadget for that connection. 

\iffull\else\vspace*{-3mm}\fi
\subsection{Reducing \mpsat to Optimal Clutter Removal}
\iffull\else\vspace*{-2mm}\fi
The complete \crp instance constructed from the \mpsat instance is given 
in Fig.~\ref{fig:reduction}, which is a straightforward assembly of the 
variable and clause gadgets. The additional items are: {\em (i)} two 
extra gray objects at the bottom that can only be lifted and removed 
after the lowest placed positive and negative clause gadgets are 
removed, {\em (ii)} the black ``cap'' object $o_{cap}$ that surrounds 
all other objects, and {\em (iii)} three exits (marked with red hexagons). 
$\W$ is not shown but can be understood as the region occupied by the 
construction with some padded space between the construction and $\partial \W$. 
Object $o_{cap}$ isolates all other objects from left and 
right exits. The robot starts at the middle exit. The instance is 
not drawn to scale. The important dimensions are $w_1$ and $w_2$ as 
marked. The distance $2w_1$ is the horizontal span of the two symmetric 
objects in a variable object. $w_1 + w_2$ is the horizontal distance 
from the middle exit to a vertical segment of a clause gadget. We 
assume that all other distances are small when compared with $w_1$ and 
$w_2$, including the vertical span of the instance and all other 
horizontal distances. Vertical span being minimal means all objects are 
relatively ``long and thin''. Given the assumption, the distance between 
the middle exit and other exits is  $w_1 + w_2$. Moreover, in 
Fig.~\ref{fig:reduction}, the lifting points within a dotted rectangle 
are $\varepsilon$-close to the corresponding exit in the same (dotted) 
rectangle with $\varepsilon$ being very small when compare with $w_1$ 
or $w_2$. Lastly, for an \mpsat instance with $n$ variables, the 
construction ensures $w_2 \gg nw_1$. 

The \crp instance is feasible: all green and orange objects can be 
removed first, exposing the connecting rectangular objects, which can 
be subsequently removed. The purple objects and the black object 
can also then be removed. Afterward, the clause objects can be 
lifted and removed. Lastly, the gray objects at the bottom can be 
removed. On the other hand, computing an optimal solution for the 
problem is hard. First, we establish that the \crp instance requires
at least a travel cost of  $(2n+4)w_1+4w_2$.

\begin{figure}[ht!]
\iffull\else\vspace*{-4mm}\fi
\begin{center}
%\resizex{
\begin{overpic}[width=5in,tics=5]{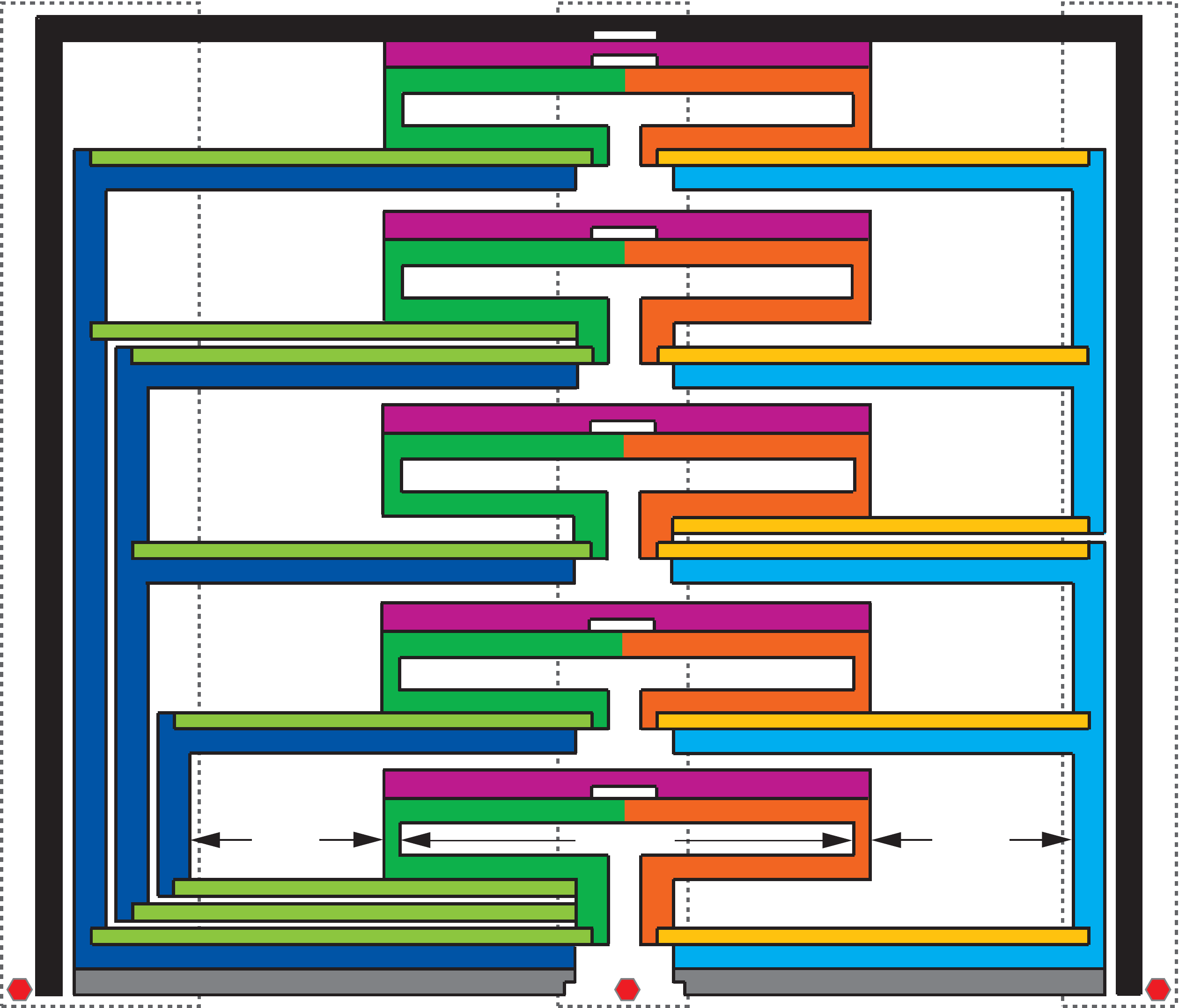}
\put(50,13.3){{\small $2w_1$}}
\put(22,13.3){{\small $w_2$}}
\put(80,13.3){{\small $w_2$}}
\end{overpic}
%}
\end{center}
\iffull\else\vspace*{-4mm}\fi
\caption{\label{fig:reduction} The \crp instance from the \mpsat instance. 
In the middle are the five variable gadgets for $x_1$--$x_5$, from bottom 
to top. On the left side are the three (blue) positive clause gadgets for 
$c_1$--$c_3$, from right to left. On the right side are the two (cyan) 
negative clause gadgets $c_4$ (lower) and $c_5$ (upper). The three red 
hexagons mark the three exits. The robot is initially located at the 
middle exit. The figure is not drawn to scale.}
\iffull\else\vspace*{-8mm}\fi
\end{figure} 

\begin{lemma}\label{l:ocr-cost}
The \crp instance requires a minimum possible cost of $(2n+4)w_1+4w_2$.
\iffull\else\vspace*{-1mm}\fi
\end{lemma}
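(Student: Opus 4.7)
My plan is to bound the total travel below by its horizontal component; since the vertical span and all ``other horizontal distances'' in the construction are assumed negligible relative to $w_1$ and $w_2$, this differs from the actual Euclidean travel by only $O(\varepsilon)$. Place the three exits at $x=0$ and $x=\pm(w_1+w_2)$, with the robot starting at $x=0$, and reduce the problem to a one-dimensional routing bound on the horizontal projection of the trajectory.

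I would then decompose the lower bound into two pieces that I will argue are additive. First, inter-exit travel of at least $4(w_1+w_2)=4w_1+4w_2$ is forced. The key structural fact is that the two gray bottom objects can only be lifted after the lowest positive \emph{and} negative clause gadgets have been removed, while positive (resp.\ negative) clause handles lie $\varepsilon$-close to the left (resp.\ right) exit and the gray pickups are near the middle. Any valid removal sequence therefore induces an exit itinerary that starts at the middle, visits both side extremes, and returns to the middle region for the gray objects; on the three-point exit set $\{-(w_1+w_2),0,+(w_1+w_2)\}$ the shortest such route has horizontal length exactly $4(w_1+w_2)$, and a triangle-inequality argument on the induced sequence makes this a lower bound on the horizontal projection of the full trajectory. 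Second, each of the $n$ variable gadgets contributes at least $2w_1$ on top. For every variable, the two objects $o_g$ and $o_o$ (together with the purple $o_p$ and the connecting lime/yellow rectangles) must be removed, and the pair of pickup positions used inside the gadget straddles a horizontal width of $2w_1$ in either lifting mode: the leftmost/rightmost handles in the default mode, or, in the alternative mode, the right end of the leftmost lime rectangle and the left end of the yellow rectangle, which sit at the two far edges of the $2w_1$-wide gadget. A charging argument assigns this $2w_1$ to travel beyond the inter-exit backbone of the first piece, yielding $2nw_1$ in total. Summing gives $(2n+4)w_1+4w_2$.

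The main obstacle will be making the charging in the second piece airtight: the inter-exit backbone from the first piece already sweeps the $x$-interval $[-w_1,+w_1]$ several times, so I must argue carefully that the per-variable $2w_1$ is not already absorbed by the backbone. The key ingredient is the one-object-at-a-time carrying constraint, which forces a separate approach-and-delivery leg for each pickup and prevents a single horizontal pass from clearing two different gadget objects ``for free''; combined with the precedence relations between $o_g$, $o_o$, $o_p$, the connecting rectangles, and the clause objects, this rules out the optimistic routings that would otherwise achieve a smaller total and seals the claimed bound of $(2n+4)w_1+4w_2$.
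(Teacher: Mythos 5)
Your Piece~1 (the $4(w_1+w_2)$ itinerary bound forced by the two gray objects) is sound and matches the paper's reasoning, but there is a genuine gap exactly where you flag it: the additivity of Piece~2 with Piece~1, and your proposed fix does not close it. The one-object-at-a-time constraint prevents one pass from clearing two gadget objects, but it does not prevent a single approach-and-delivery leg from simultaneously performing a variable-gadget pickup \emph{and} making backbone progress: nothing in your argument forbids the robot from walking from the middle toward $x=-w_1$, grasping a green object there, and carrying it out through the \emph{left} exit, so that this leg is counted once in Piece~1 and again in Piece~2. Because all $n$ variable gadgets share the same horizontal footprint $[-w_1,+w_1]$, the horizontal projection cannot by itself distinguish one gadget from $n$; and for the pickups that are not absorbed into the backbone you must still exclude the cheap lifting modes ($o_g$'s lowest handle, the connectors' inner ends), which sit near the middle exit and would make those legs negligible. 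The gadget-internal precedences you cite ($o_g$, $o_o$, $o_p$, connectors, clause objects) are not what rules these routings out.

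The missing ingredient is the black cap object $o_{cap}$, which your proposal never uses and which is what makes the two cost components temporally disjoint in the paper's proof: \emph{(i)} $o_{cap}$ isolates the interior from the left and right exits, so every removal performed before $o_{cap}$ is gone must be a round trip through the middle exit; \emph{(ii)} before $o_{cap}$ is removed a clause object can only be extracted through the middle exit at cost $2(w_1+w_2)$ each, and no cheap lifting mode of a variable object becomes available before some clause object is removed; \emph{(iii)} removing $o_{cap}$ from inside requires first removing at least $n$ of the green/orange variable objects (one per gadget), each of which is therefore a full $2w_1$ round trip from the middle by \emph{(i)} and \emph{(ii)}; and \emph{(iv)} the alternative of removing $o_{cap}$ from outside already costs $6(w_1+w_2)>(2n+4)w_1+4w_2$ since $w_2\gg nw_1$. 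Only with \emph{(i)}--\emph{(iv)} does the $2nw_1$ (pre-cap phase) add to the $4(w_1+w_2)$ of your Piece~1 (post-cap phase). Separately, note that the paper's proof of this lemma also establishes the matching achievability bound from a satisfying \mpsat assignment, which Theorem~\ref{t:ocr-np-hard} relies on; your proposal addresses only the lower bound.
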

\begin{proof}
First, given a feasible assignment to the \mpsat instance, we show that 
the \crp instance admits a solution with a total travel distance of 
$(2n+4)w_1+4w_2$. Starting in the middle, if a variable $x_i$ is assigned 
to be positive (resp., negative), the robot removes the green (resp., 
orange) object from the $i$-th variable gadget from the bottom, which 
incurs a distance cost of $2w_1$ per variable. Then, the associated 
purple object and lime (resp., yellow) objects can be removed, which 
incurs minimum extra cost. The step ends with the removal of the black 
cap object. The total distance cost so far is $2nw_1$. 
At this point, every clause gadget object has at least one connecting 
rectangular (lime or yellow) object removed, allowing the clause object 
to be lifted. The robot then moves to leftmost and clears all blue 
(positive) clause gadget objects through the leftmost exit, followed by 
clearing all leftover connecting lime objects. The same is then performed 
on the right side. Lastly, the robot moves back to the middle to clear 
everything else. The main cost in this step is incurred by the travel 
from the middle to the left, then the right, then back, totaling 
$4(w_1 + w_2)$. The grand total is $(2n+4)w_1+4w_2$.

Next, we show the cost is minimal. Beside the above stated removal 
sequence, there are two alternatives. A first is to start with removing 
the black cap object from the top (e.g., through the leftmost exit to
the top of the black object) . This incurs a cost of $3(w_1 + w_2)$ and 
the robot will be at either the leftmost exit or the rightmost exit after 
the removal. Suppose without loss of generality it is the leftmost exit. 
Then, to remove the cyan clause objects (which can only be removed from 
the rightmost) and subsequently the right gray object, the robot must 
travel another $3(w_1 + w_2)$ distance. The total by now is already 
$6(w_1 + w_2) > (2n+4)w_1+4w_2$ because $w_2 \gg nw_1$. 

The second alternative is to remove some of the green/orange/purple 
objects but does not go all the way to the black object before moving to the 
leftmost or rightmost to work on clause objects. Since the two gray 
objects can only be removed from the middle after corresponding clause 
objects are removed, this means that if the robot goes to, e.g., the 
leftmost and then rightmost and then come back to the center, it will 
already incur a cost of $4(w_1 + w_2)$, which means the robot can only 
go to the leftmost and the rightmost once each. This is however 
insufficient because before the black cap object is removed, a blue 
or cyan clause object can only be removed from the center exit and
therefore, multiple trips to the leftmost or the rightmost are necessary. 
~\qed
\end{proof}

\begin{theorem}\label{t:ocr-np-hard}
\iffull\else\vspace*{-2mm}\fi
Planar optimal clutter removal is NP-hard. 
\iffull\else\vspace*{-2mm}\fi
\end{theorem}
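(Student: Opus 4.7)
The plan is to finish the polynomial-time reduction from \mpsat to planar \crp. Since \mpsat is NP-hard \cite{de2010optimal} and the construction in Fig.~\ref{fig:reduction} clearly has size polynomial in the number of variables $n$ and the number of clauses, it suffices to show that the constructed \crp instance admits a removal sequence of total travel cost exactly $(2n+4)w_1+4w_2$ if and only if the underlying \mpsat instance is satisfiable. Lemma~\ref{l:ocr-cost} already supplies the matching lower bound and also gives the ``satisfiable $\Rightarrow$ achievable'' direction: guided by a satisfying assignment, the robot removes $o_g$ or $o_o$ in each variable gadget according to the chosen polarity at a cost of $2w_1$ each, then clears the newly exposed connectors, the purple objects, and the black cap, and finally sweeps the clause gadgets via the left and right exits using two round trips of length $2(w_1+w_2)$. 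Satisfaction of every clause guarantees that each clause object already has at least one connector removed, so its thin handle is exposed when the robot arrives from the side exit.

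The remaining direction is the converse: any removal sequence of cost $(2n+4)w_1+4w_2$ must encode a satisfying assignment. My approach is a case analysis driven by the scale separation $w_2 \gg nw_1$. The horizontal budget $4w_2$ permits at most one trip from the central exit to the leftmost exit and one to the rightmost exit; any additional $w_2$-scale excursion would immediately exceed the bound. Since $o_{cap}$ isolates the interior from the two side exits, it must be removed during the central phase, before those trips begin. During the single trip to the leftmost exit, every positive clause object must be cleared in a single pass, which requires at least one of its incident lime connectors to be removable when the robot reaches it. Tracing the dependencies inside the variable gadget shows that such a connector becomes removable without an extra $w_2$-scale detour only if the corresponding $o_g$ has already been removed from the middle, i.e., only if the associated variable is set to true. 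Hence the choice of $o_g$ versus $o_o$ across the $n$ variable gadgets induces a truth assignment that satisfies every positive clause. A symmetric argument on the right side, with $o_o$ and the yellow connectors, handles the negative clauses, and together they yield a satisfying assignment for \mpsat.

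The main obstacle is rigorously ruling out the ``circular'' shortcuts hinted at in the variable gadget description, in which a clause object is removed first in order to expose a connector that in turn unlocks a green or orange object. I would adapt the second alternative already treated in the proof of Lemma~\ref{l:ocr-cost} and show that any such reordering forces at least one additional traversal of horizontal distance on the order of $w_2$, which cannot be absorbed within the target budget. Combined with the bookkeeping for the two gray bottom objects, which can only be grasped from the middle after both side excursions are complete, this pins the optimal schedule down tightly enough that the induced variable choices must satisfy \mpsat, thereby completing the reduction and proving Theorem~\ref{t:ocr-np-hard}.
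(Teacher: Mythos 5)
Your proposal follows essentially the same route as the paper: invoke Lemma~\ref{l:ocr-cost} for the ``satisfiable $\Rightarrow$ cost $(2n+4)w_1+4w_2$'' direction and the matching lower bound, then argue for the converse that the $4w_2$ budget allows only one excursion to each side exit, that $o_{cap}$ must therefore be removed during the central phase (forcing exactly $n$ variable-object removals at $2w_1$ each), and that clearing all positive (resp.\ negative) clause objects in a single side pass forces the chosen green (resp.\ orange) objects to encode a satisfying assignment. The ``circular shortcut'' concern you flag is precisely what the second alternative in the paper's Lemma~\ref{l:ocr-cost} analysis rules out, so your plan is correct and matches the paper's argument.
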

\begin{proof}
The proof of Lemma~\ref{l:ocr-cost} already shows that a solution to the 
\mpsat instance leads to a \crp solution of cost $(2n+4)w_1+4w_2$; we only 
need to prove the other direction. Assume the constructed \crp problem has 
an optimal solution with a total cost of $(2n+4)w_1+4w_2$. As has been 
established, the robot must travel to the leftmost and rightmost side at
most once and then eventually return to the middle (with costs 
$4(w_1 + w_2)$), suggesting that the black cap object must be removed 
first before any clause objects can be removed. To be able to remove the 
black cap object, at least $n$ variable objects must be removed, which 
incur a cost of $2nw_1$. Because this already exhausted the total cost, 
No more than $n$ variable objects can be removed before the black cap 
object is. Afterward, we may assume without loss of generality 
that the robot moves to the leftmost to remove the blue clause objects 
through the left exit and must remove all of them before traveling back. 
This implies that the removed green variable objects 
``satisfies'' all the blue (positive) clause objects. Similarly, the 
cyan (negative) clause objects must also be ``satisfied'' by the removed 
orange variable objects. This yields a satisfactory assignment for
the \mpsat problem. ~\qed
\end{proof}

Since it is easy to verify whether a given solution is optimal or not, 
planar \crp (the simplified combinatorial version without considering 
complex motion planning) is also in NP. Therefore, this version of 
optimal clutter removal is NP-complete. A corollary follows that 
applies to two exits. 

\begin{corollary}
\iffull\else\vspace*{-2mm}\fi
Planar optimal clutter removal is NP-hard with two exits. 
\iffull\else\vspace*{-2mm}\fi
\end{corollary}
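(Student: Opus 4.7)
The plan is to adapt the three-exit construction from the proof of Theorem~\ref{t:ocr-np-hard} into a two-exit instance while preserving the cost-based reduction from \mpsat. Specifically, I would remove the middle exit and instead carve a thin vertical corridor inside $\W$, free of removable objects, that connects the bottom boundary near the left exit up to the central region containing the variable gadgets. This corridor lets the robot access the center (where variable-gadget operations must take place) from the left exit at a fixed additional travel cost of $O(w_2)$, without otherwise altering the geometry of the variable and clause gadgets or the cap $o_{cap}$.

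Next I would re-run the accounting of Lemma~\ref{l:ocr-cost} with the robot starting at the left exit. A satisfying \mpsat assignment still yields a tour of cost $2nw_1 + O(w_1+w_2)$: enter the center via the corridor, remove one of the two symmetric variable objects per variable (green for \emph{true}, orange for \emph{false}), clear the connecting lime/yellow rectangles, the purple objects, and the cap $o_{cap}$; exit back to the left exit and remove all positive (blue) clause objects; traverse to the right exit and remove all negative (cyan) clause objects; and finish with the remaining gray objects. Since $w_2 \gg nw_1$ by construction, the exact minimum cost can be written in the form $(2n + c_1)w_1 + c_2 w_2$ for small constants $c_1, c_2$ depending only on the gadget layout.

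Conversely, if an optimal solution attains this cost, the same three-case analysis as in Lemma~\ref{l:ocr-cost} applies. The robot must (a) remove all required variable objects and the cap before visiting the right exit for clause removal, (b) make at most one round-trip between the two exits, and therefore (c) the set of removed green variable objects must cover every positive clause, and the set of removed orange ones every negative clause. This yields a satisfying assignment to the underlying \mpsat instance, completing the reduction; NP-membership follows exactly as in Theorem~\ref{t:ocr-np-hard}.

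The main obstacle I anticipate is verifying that the cap $o_{cap}$ still geometrically isolates every clause gadget from both remaining exits prior to variable-object removal. In the three-exit construction this is automatic, but the newly carved corridor introduces a potential shortcut that could, if left unchecked, let the robot lift a blue or cyan clause object without first paying the $2nw_1$ variable-gadget cost. I plan to close this loophole by extending the cap over the top of the corridor opening, so the corridor admits only narrow passage into the variable-gadget region and the cap can still be lifted only after sufficiently many variable objects have been cleared. With these geometric adjustments, the cost argument of Lemma~\ref{l:ocr-cost} carries over essentially verbatim to the two-exit setting.
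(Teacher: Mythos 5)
Your route is genuinely different from the paper's, and it has a gap. The paper keeps the \emph{middle} exit (where the robot starts and does all of its variable-gadget work at cost $2w_1$ per variable) and instead ``bends'' the construction at the center so that the left and right arms fold upward until the leftmost and rightmost exits become $\varepsilon$-close and can be merged into a single far exit; since the vertical span is negligible, nothing else changes, and the optimum simply drops to $(2n+2)w_1+2w_2$ because only one round trip to the merged far exit is needed. You do the opposite: you delete the middle exit and reach the variable gadgets through a corridor from the left exit. That breaks the cost accounting on which the whole reduction rests. Each object must be carried out through an exit, one at a time, so every one of the $n$ variable-object removals is a separate round trip between some exit and the central region; after removing the middle exit, both surviving exits are at geodesic distance at least $w_1+w_2$ from the variable gadgets (no corridor can shorten this --- it is the distance in the plane), so those removals alone cost at least about $2n(w_1+w_2)$. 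Your claimed tour cost $2nw_1+O(w_1+w_2)$ for a satisfying assignment is therefore unachievable, and since $w_2\gg nw_1$ the discrepancy $\Theta(nw_2)$ dwarfs every term that actually distinguishes satisfying from non-satisfying assignments.

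This is not merely a bookkeeping slip: the converse direction of Lemma~\ref{l:ocr-cost} hinges on the budget $(2n+4)w_1+4w_2$ being \emph{exactly} exhausted, which is what forces at most $n$ variable objects to be removed before the cap and at most one visit to each far exit. Once the dominant term becomes $2n(w_1+w_2)$ --- incurred for \emph{any} choice of $n$ variable objects --- that argument does not ``carry over essentially verbatim''; you would have to redo the entire lower bound and additionally rule out the new routing options the corridor creates (e.g., piggybacking clause or connector removals onto the many center round trips). Your proposed fix of extending $o_{cap}$ over the corridor mouth addresses isolation, but not this accounting problem. The lesson is that the exit you eliminate must be one of the two \emph{far} exits, not the middle one, which is exactly what the paper's folding achieves.
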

\begin{proof} We note that the \crp instance can be ``bent'' in the 
middle with the left and right sides bending up until they almost meet. 
Since the initial vertical span of the \crp instance is negligible, 
this causes the two exits to also be $\varepsilon$-close, i.e., they can 
be merged into a single exit. This yields a new \crp instance with two 
exits. The NP-hardness proof continues to work with the updated optimal 
cost being no more than $(2n+2)w_1 + 2w_2$.~\qed
\end{proof}

\remark Though the robot can lift large objects by assumption, lifting a 
long object in the middle and then going through an exit can potentially 
lead to issues. In the constructed \crp instance, these are the purple 
objects and the black cap object; all other objects are lifted from one
end (recall that the vertical span of the construction is negligible). 
The issue can be resolved by breaking an involved object into two equal 
pieces in the middle which can then be taken away separately (see 
Fig.~\ref{fig:break-apart}), without incurring much additional travel 
cost (the vertical span of the black object is assumed to be very small). 
By doing this, the robot will always be holding a long object from one end. 

\begin{figure}[ht!]
\iffull\else\vspace*{-5mm}\fi
\begin{center}
%\resizex{
\begin{overpic}[width=5in,tics=5]{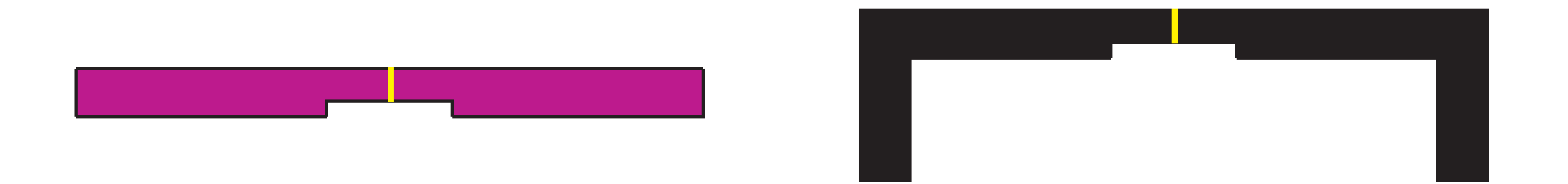}
\end{overpic}
%}
\end{center}
\iffull\else\vspace*{-7mm}\fi
\caption{\label{fig:break-apart} How the purple and black objects can be 
broken into more manageable pieces, as indicated by the yellow lines, 
without affecting solution structure. Objects are not drawn to scale.}
\iffull\else\vspace*{-6mm}\fi
\end{figure}

\iffull\else\vspace*{-3mm}\fi
\section{Resolution-Complete Algorithms for \crp}\label{section:algorithm}
\iffull\else\vspace*{-3mm}\fi
By Proposition~\ref{p:complete}, applying a resolution-complete 
algorithm for identifying candidate objects for removal
will result in a resolution-complete algorithm for \crp. 
In this section, we will construct several resolution-complete 
algorithms realizing varying levels of optimality guarantees. 
With the preparation done in Section~\ref{sec:alg-comp}, our 
construction assumes knowledge of currently graspable objects and 
the costs of reaching them, and focuses on the selection of 
clutter removal sequence based on these information. 

Before introducing the algorithms, we present an example that 
illustrates additional structures of \crp in Fig.~\ref{fig:optimal}(a), 
for which a greedy removal sequence is indicated in Fig.~\ref{fig:optimal}(b) 
and an optimal one is given in Fig.~\ref{fig:optimal}(c). Assuming 
similar grasping cost, the travel time used by the greedy approach is 
about 1.3 times of that used by the optimal sequence. On the other hand, 
Theorem~\ref{t:ocr-np-hard} indicates that optimal removal sequences can 
be hard to come by. This motivates the construction of both optimal and 
greedy algorithms for \crp. 

\begin{figure}[ht!]
\iffull\else\vspace*{-4mm}\fi
\begin{center}
%\resizex{
\begin{overpic}[width=4.5in,tics=5]{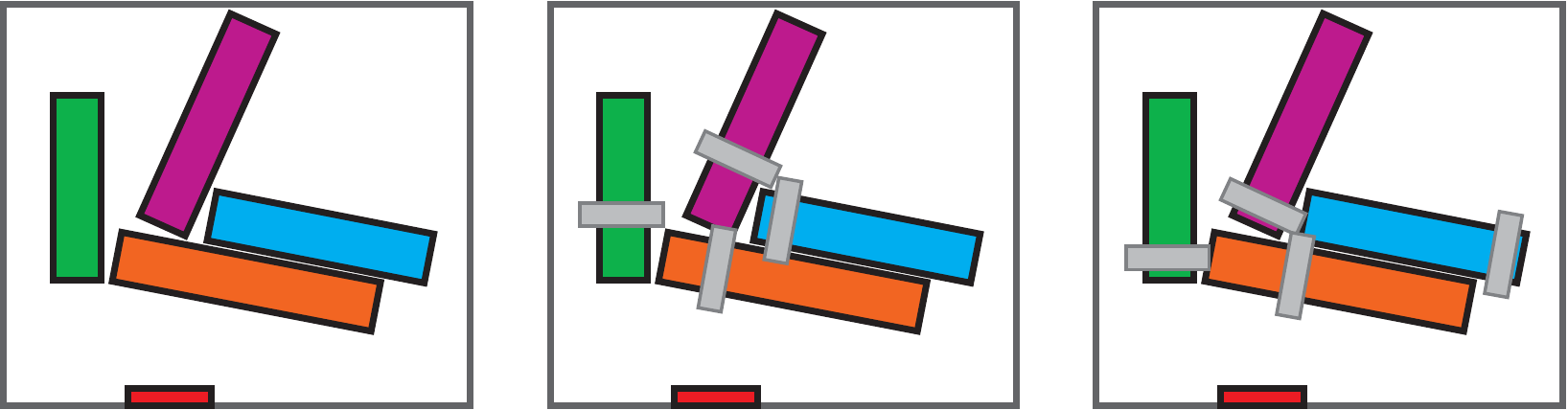}
\put(35,11){{\small $1$}}
\put(42,16){{\small $2$}}
\put(44.5,3){{\small $3$}}
\put(52,14){{\small $4$}}
\put(95,14){{\small $1$}}
\put(81.5,2.5){{\small $2$}}
\put(74,4){{\small $3$}}
\put(77,15){{\small $4$}}
\put(14,-4){{\small (a)}}
\put(49,-4){{\small (b)}}
\put(83,-4){{\small (c)}}
\end{overpic}
%}
\end{center}
\iffull\else\vspace*{-4mm}\fi
\caption{\label{fig:optimal} (a) A \crp instance. (b) A greedy removal 
sequence, with the gray bar marking the grasping position by a 2-finger
gripper (top view). (c) An optimal removal sequence.}
\iffull\else\vspace*{-6mm}\fi
\end{figure} 

\iffull\else\vspace*{-4mm}\fi
\subsection{Single Exit: Exhaustive Search with Dynamic Programming}
\iffull\else\vspace*{-2mm}\fi
An exhaustive search approach based on backtracking \cite{van2009path} may 
be applied to derive an algorithm to search for the optimal object removal 
sequence. The basic idea is straightforward: all possible object removal 
sequences are examined and the one with the best cost is chosen. In the 
context of the current study, a search tree is grown and explored in a depth 
first manner, with each path from the root to a leaf node representing a 
complete object removal sequence. Since all permutations are examined, 
this guarantees an optimal solution is found as long as the cost estimate 
for grasping and transporting each object is accurate. This later part is in
turn guaranteed in a resolution-complete manner in this work, because
resolution-complete algorithms are used to build the common motion and 
grasp planning components. Complete exhaustive search is also possible. 

A daunting challenge in examining all branches of a search tree with 
depth $n$ is the nominal time complexity of $O(n!)$, prohibitively 
expensive for even small $n$ (e.g., $n > 5$). For the \crp problem, 
a form of {\em dynamic programming} (DP) may be applied to significantly 
reduce this complexity as follows. Let $I = \{1, \ldots, n\}$
and $I' \subset I$. Let $J(I')$ denote the optimal cost of removing 
all objects with indices in $I'$ assuming that objects with indices in 
$I\backslash I'$ are already removed. Then we have the key DP recursion 
\begin{align}\label{eq:recursion}
\iffull\else\vspace*{-1mm}\fi
J(I') = \min_{i \in I'} \{ c_i + J(I'\backslash\{i\})\},
\iffull\else\vspace*{-1mm}\fi\end{align}
where $c_i$ is the cost of removing $o_i$ assuming that objects with 
indices in $I\backslash I'$ are already removed. For $|I'| = k$,
there are $\binom{n}{k}$ possible $I'$ and for each, computing 
\eqref{eq:recursion} requires a cost of $O(n)$. This then yields a total
computational cost of 
\[
\iffull\else\vspace*{-1mm}\fi
O(n) \big[\binom{n}{0} + \binom{n}{1} + \ldots + \binom{n}{n-1}\big]= O(n2^n). 
%\vspace*{-1mm}
\]
We note that $2^n$ grows much slower than $n! \sim \sqrt{2\pi n}(n/e)^n$. 

In addition to DP, two additional structural 
properties of \crp can be exploited to further boost computational efficiency 
without affecting solution optimality, namely: 
\begin{itemize}[leftmargin=.16in]
\iffull\else\vspace*{-1mm}\fi
\item \textbf{Reachability}. Objects in clutter naturally create 
workspace obstacles, limiting the access to other objects and thus 
reducing effective search branching factor. For example, the purple 
object in Fig.~\ref{fig:optimal}(a) may be inaccessible initially.
\item \textbf{Object clustering}. It is possible that the objects form 
{\em clusters} that are independent in terms of the removal cost, i.e., 
there may be two or more isolated ``piles'' of objects. Note that in
some cases, a pile may need to be removed first before another can be 
removed effectively. Clusters can be readily identified by grouping 
objects that are close to each other. 
%\item \textbf{Order invariant post object removal}. In creating the search 
%tree, all object permutations are examined. However, after a sequence of 
%objects are removed, e.g., $o_1$--$o_3$, the order with which they are 
%removed no longer matters for later search. This property, unique to 
%\crp, allows the reuse of sub-trees and saves computation. 
%\item \textbf{Sample reuse}. In the construction of the \rrts data structure,
%sampling can be done only once at the beginning. Using footprints (i.e. 2D
%projections) of the objects, it is easy to know which samples are not in 
%obstacles, which can be used to revise the \rrts tree. 
\iffull\else\vspace*{-3mm}\fi
\end{itemize}
%As shown in simulation studies, these techniques allow the scaling of the 
%exhaustive approach to $n > 30$. In terms of computational complexity,
%exhaustive search has super-exponential dependency on $n$, the number 
%of objects. 

\iffull\else\vspace*{-2mm}\fi
\subsection{Single Exit: Greedy Best-First Search}
\iffull\else\vspace*{-1mm}\fi
The NP-Hardness of optimally solving \crp means that exhaustive search 
cannot run in polynomial time, which prompts the development of greedy 
approaches: the object with the lowest {\em local} removal cost is selected 
and removed; the same process is then recursively applied until all objects 
are cleared. In addition to the basic greedy best-first strategy which 
only looks at a single step, two more involved methods are also explored:
\begin{itemize}[leftmargin=.16in]
\item \textbf{Multi-step best-first search}. This method computes cost 
after growing the search tree to some depth $k \ge 1$. The approach, 
a finite-horizon technique, balances between increased computation and 
better solution optimality. For example, if $k$ is set to $3$, then the case 
from Fig.~\ref{fig:optimal} can be solved optimally using multi-step 
best-first search. 
\item \textbf{Monte Carlo Tree Search (MCTS)}. As the core complexity 
arises from finding a best path along a search tree, another natural 
choice is Monte Carlo Tree Search (MCTS) \cite{coulom2006efficient,
kocsis2006bandit}, which performs limited search tree exploration with 
varying depth along different tree branches. This can be viewed as a 
Monte Carlo version of the multi-step best-first search strategy. 
\end{itemize}

In terms of computational complexity, all greedy approaches described here
have low polynomial dependency on $n$, the number of objects. 
%As a rough estimate, for a dense 
%clutter with $n$ objects, about $O(\sqrt{n})$ objects are accessible by the 
%mobile robot initially. This leads to a complexity of $O(n\sqrt{n}) = 
%O(n^{\frac{3}{2}})$ for a one step greedy algorithm and a complexity of 
%$O(n^{\frac{2k + 1}{2}})$ for a multi-step best-first strategy. The 
%computational complexity of MCTS can be estimated similarly. 
\jy{Some practical considerations are commented out here; maybe add these
back later.}
%{\em Practical considerations}. In disaster response situations, greedy 
%best-first and multi-step best-first search with small $k$ values can be 
%readily applied. The same does not always hold for exhaustive search and 
%MCTS because it may not always be possible to accurately estimate removal 
%costs for objects that are currently occluded. Therefore, one should keep 
%in mind that the choice of algorithms is contingent on factors including 
%scene complexity and sensing capability. 

\iffull\else\vspace*{-4mm}\fi
\subsection{Multiple Exits: Extending Exhaustive Search with Dynamic Programming }
\iffull\else\vspace*{-2mm}\fi
Algorithms for the single-exit case generalize to multiple exits. Whereas 
the greedy algorithm requires little change, exhaustive search with DP 
requires a non-trivial extension. Let $I = \{1, \ldots, n\}$ and $I' 
\subset I$. Let $J_{ij}(I')$ denote the optimal cost of removing 
all objects with indices in $I'$ with the robot starting from exit 
$i$, and ending at exit $j$, assuming that objects with indices in 
$I\backslash I'$ are already removed. Let $E$ denote the set of all 
exits of the environment. The updated DP recursion is 
\begin{align}\label{eq:recursion-m}
J_{ij}(I') = \min_{e \in E}{\min_{k \in I'} \{ c_{ej}(k) + J_{ie}(I'\backslash\{k\})\}}, 
\end{align}
where $c_{ij}(k)$ is the cost of starting at exit $i$ and removing $o_k$ 
from through exit $j$, assuming that objects with indices in $I\backslash 
I'$ are already removed. For $|I'| = k$, if we assume the robot always 
start from a fixed exit, there are $|E|\binom{n}{k}$ possible $I'$ and 
for each, computing \eqref{eq:recursion-m} costs $O(|E|n)$. The total is 
\[
O(|E|n) \big[|E|\binom{n}{0} + |E|\binom{n}{1} + \ldots + |E|\binom{n}{n-1}\big]= O(n2^n|E|^2). 
\]

\subsection{Multiple Exits: Voronoi Partitions}
\iffull\else\vspace*{-1mm}\fi
%\subsubsection{Direct extensions of single-exit methods} The algorithms 
%described for the single-exit case readily generalizes to multiple exits with 
%minimal change: the only required update is the maintenance of multiple \rrts 
%data structures, one for each exit. That is, for each exit, an \rrts data 
%structure like those illustrated in Fig.~\ref{fig:rrts} must be maintained. 
\begin{wrapfigure}[8]{r}{2.05in}
\iffull\vspace*{-8mm}\else\vspace*{-7mm}\fi
\begin{overpic}[width=2in,tics=5]{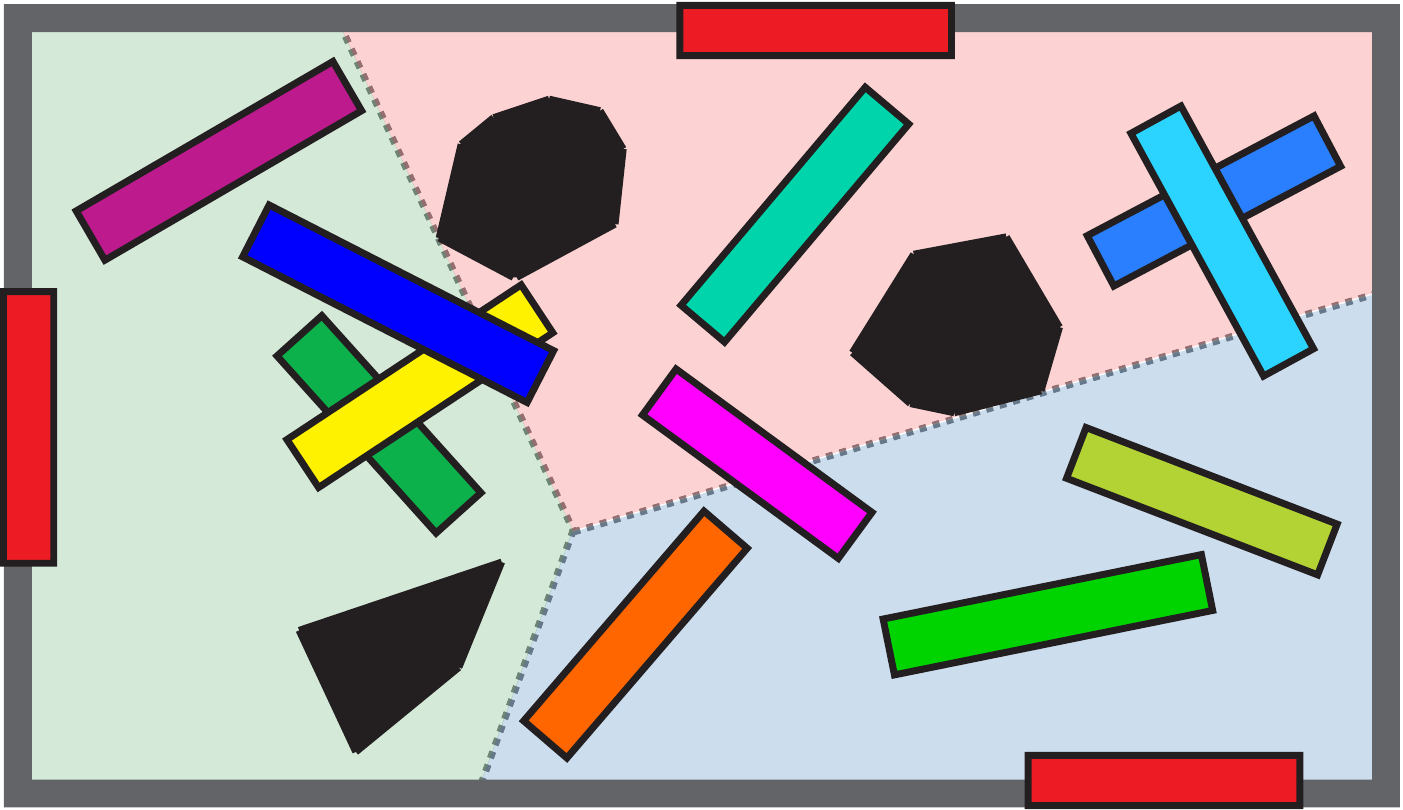}
\end{overpic}
\iffull\vspace*{-4mm}\else\vspace*{-4mm}\fi
\caption{\label{fig:cr2dv} The Voronoi partition of the example \crp  
(Fig.~\ref{fig:cr2d}).}
\iffull\vspace*{-3mm}\else\vspace*{-3mm}\fi
\end{wrapfigure}
An alternative algorithm for multiple exits may look at the Voronoi 
partitions induced by $\W$ and the exits, and let the mobile robot remove 
objects through an exit if the object falls within the corresponding 
Voronoi region. After the robot finishes working with objects within a 
Voronoi partition, it moves to the next Voronoi partition. As an example, 
the Voronoi regions for the three-exit scenario in Fig.~\ref{fig:cr2d} 
is given Fig.~\ref{fig:cr2dv}. Through each exit, the robot will remove 
around four objects. 

When objects are not heavily entangled, using Voronoi partition plus 
any single-exit method incurs an additional travel cost equaling  
at most the length of $\partial \W$. Indeed, simulation study shows that an Voronoi 
based algorithm performs quite well when compared with direct extensions of 
single-exit methods. Moreover, Voronoi partitions can be readily obtained
based on the shape of $\W$ and the exits' locations, whereas direct extensions
of single-exit methods may require additional sensing information to work. 
That is, to make estimates on which objects can be removed and the associated 
costs, the robot may need to physically travel through each exit to acquire 
that information. 

\iffull\else\vspace*{-3mm}\fi
\section{Experimental Evaluation}\label{section:evaluation}
\iffull\else\vspace*{-3mm}\fi
%\vspace*{-2mm}
A sequence of experiments were designed to evaluate the effectiveness 
of the algorithms for \crp. Each experiment also provides
additional new insights into the structure of \crp. The algorithms were 
implemented in C++ and  executed on a quad-core Intel CPU at 3.3GHz with 
32GB RAM. A video of a simulated Kuka youBot carrying out \crp tasks is 
provided that corroborates the evaluation described in this section. 

\iffull\else\pagebreak\fi
%\vspace*{-4mm}
\subsection{Single-Exit Scenarios}
%\vspace*{-1mm}
\begin{wrapfigure}[6]{r}{1.45in}
\iffull\vspace*{-8mm}\else\vspace*{-8mm}\fi
\begin{overpic}[width=1.4in,tics=5]{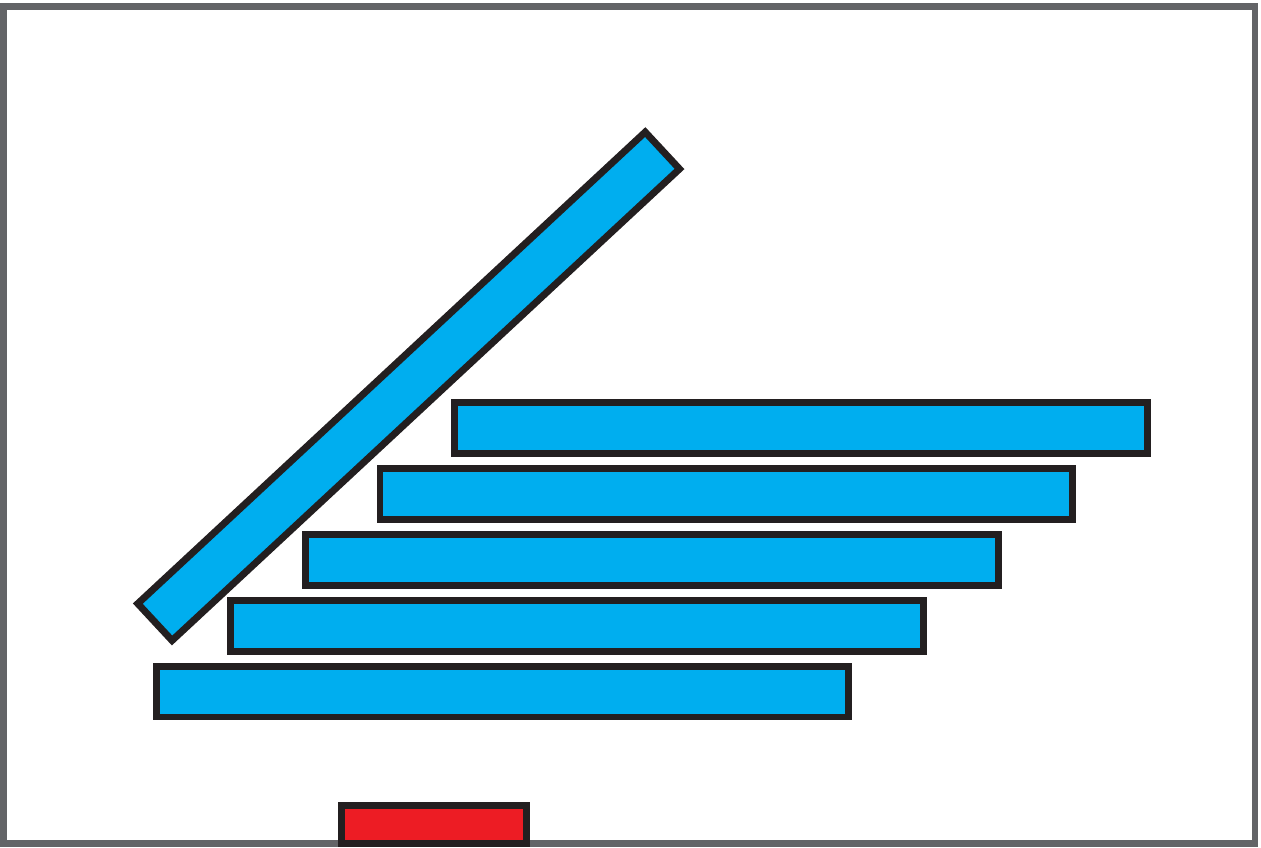}
\end{overpic}
\iffull\else\vspace*{-6mm}\fi
\end{wrapfigure}
\iffull\else\vspace*{-1mm}\fi
There are cases (e.g., Fig.~\ref{fig:optimal}) where a greedy algorithm 
for \crp can be rather sub-optimal when compared with the exhaustive 
algorithm. One may ask the natural question of whether such differences 
actually matter in practice. To evaluate this, we integrated the full 
solution pipeline with Gazebo using KUKA youBot as the mobile robot (see, 
e.g., Fig.~\ref{fig:scene}). As a first experiment, we evaluated the 
execution time of plans obtained by both the exhaustive and greedy 
algorithm for the scene illustrated on the right. While the 
plan provided by the greedy algorithm has a total travel distance 
that is $2.1$ times that from the exhaustive one, the ratio of
execution time in Gazebo for the two cases is about $1.4$ (see 
the submitted video). The difference in the two ratios ($2.1$ v.s. 
$1.4$) is due to the time required for grasping/releasing
the objects, which is almost the same for both. The example can be 
readily generalized to yield a family of ``bad cases'' by stacking
the same pattern over and over. 

\crp scenarios such as these illustrated in Fig.~\ref{fig:reduction}, 
Fig.~\ref{fig:optimal}, and the previous example are highly non-random. 
It is unlikely to encounter these in practice, where clutter
tends to have a more random placement. Our second experiment focuses
on different random single-exit clutter removal scenarios with the following 
possible opposing properties:
\begin{itemize}[leftmargin=.38in]
\iffull\else\vspace*{-1mm}\fi
\item[\textbf{S/C}:] Whether the objects are {\em scattered} uniformly in the 
room or {\em centered} in the room. 
\item[\textbf{R/A}:] Whether the objects are oriented in {\em random directions} or 
they are {\em axis-aligned}.
\item[\textbf{O/N}:] Whether the objects are {\em overlapping} or not. 
\iffull\else\vspace*{-1mm}\fi\end{itemize}
Among the eight combinations that were attempted, we select four 
representative settings as illustrated
\begin{figure}[ht!]
    \iffull\else\vspace*{-4mm}\fi
    \begin{center}
    %\resizex{
    \begin{overpic}[width=6.5in,tics=5]{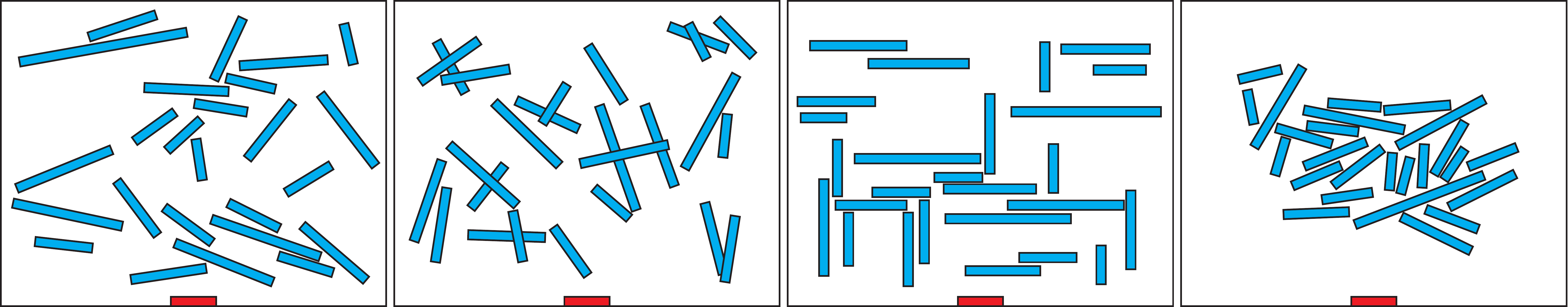}
    \put(10.5,-4){{\small (a)}}
    \put(35.5,-4){{\small (b)}}
    \put(60.5,-4){{\small (c)}}
    \put(85.5,-4){{\small (d)}}
    \end{overpic}
    %}
    \end{center}
    \iffull\else\vspace*{-2mm}\fi
    \caption{\label{fig:four-cases} Four selected settings: (a) \textbf{SRN}.
		(b) \textbf{SRO}. (c) \textbf{SAN}. (d) \textbf{CRN}. }
    \iffull\else\vspace*{-4mm}\fi
\end{figure} 
in Fig.~\ref{fig:four-cases}. The objective of the second experiment is 
to evaluate the relative computational complexity of different cluttered 
scenes. For this, both exhaustive search and greedy search are attempted, 
with the results for exhaustive search (with heuristics) shown in 
Fig.~\ref{fig:four-cases-ex}. 
We note that all test cases are generated randomly with varying object 
numbers and lengths; for each setting and each number of objects 
($5$-$40$), $20$ test cases are created. A data point corresponds to 
the average over the $20$ cases. For each case, a time limit of $400$
seconds is placed. If one of the $20$ cases exceeds the limit, no data point 
for that setting is included. The travel cost is unit-less.
\begin{figure}[ht!]
    %\vspace*{-2mm}
    \begin{center}
    %\resizex{
    \begin{overpic}[width=3in,tics=5]{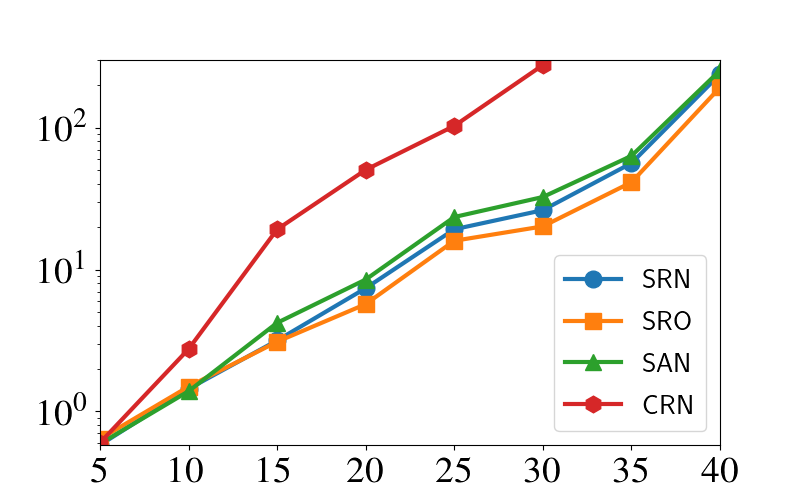}
    \put(-8,10){\rotatebox{90}{\scriptsize Computation Time}}
    \end{overpic}
		\hspace{5mm}
    \begin{overpic}[width=3in,tics=5]{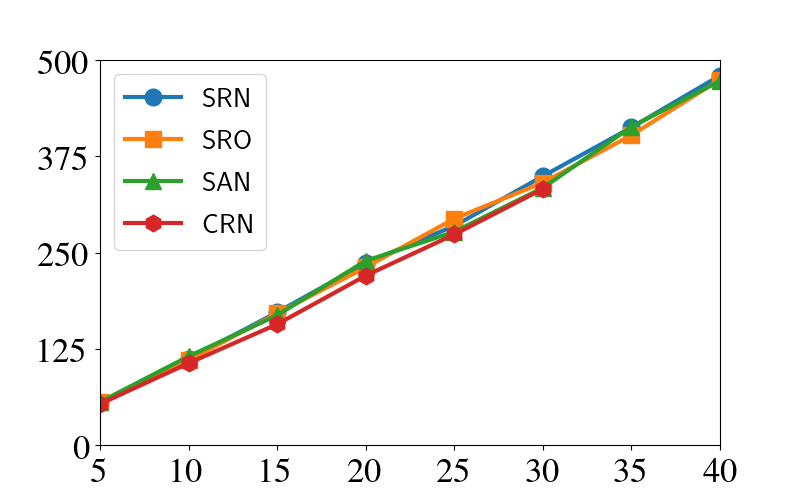}
    \put(-6,20){\rotatebox{90}{\scriptsize Total Cost}}
    \end{overpic}

    %}
    \end{center}
    \iffull\else\vspace*{-7mm}\fi
    \caption{\label{fig:four-cases-ex} The computational time and total 
		distance cost using Exhaustive search (with heuristics) on 
		four different settings as illustrated in Fig.~\ref{fig:four-cases}. 
		The $x$-axis legends show the number of robots.}
		\iffull\else\vspace*{-6mm}\fi
\end{figure} 
From the result the following observations can be made:

\begin{itemize}[leftmargin=.16in]
\iffull\else\vspace*{-1mm}\fi
\item Scenes with overlapping objects (\textbf{SRO}, 
Fig.~\ref{fig:four-cases}(b)) are easier than scenes 
with non-overlapping objects (\textbf{SRN}, Fig.~\ref{fig:four-cases}(a)). 
%Some \textbf{SRN} cases with $35$ objects ran over $400$ seconds. 
%This is because overlapping objects actually reduce the branching factor 
%in the search tree as an object beneath another one will not appear 
%in the search tree until all blocking objects are removed. 
%\item 
Axis-aligned cases (\textbf{SAN}, Fig.~\ref{fig:four-cases}(c)) 
are slightly harder than cases where objects' orientations 
are more random (\textbf{SRN}, \textbf{SRO}). 
\item Centered cases (\textbf{CRN}, Fig.~\ref{fig:four-cases}(d)) are 
much more challenging (notice the logarithmic scale computation time in 
Fig.~\ref{fig:four-cases-ex}). This is due to 
two reasons: the objects are closer and more objects are graspable, 
making the branching factor larger in the search tree. 
\end{itemize}
\iffull\else\vspace*{-1mm}\fi

We further observe that (from data omitted due to space constraint) 
somewhat surprisingly, the greedy algorithm computes solutions 
for all cases with nearly the same total distance costs and does so 
with much less computation time. To study this further, we fixate 
on \textbf{SRN} as we expect this to be typical and also harder 
than \textbf{SRO}. Multiple algorithms were tested and the result is 
given in Fig.~\ref{fig:algorithms}, which clearly shows that the basic 
greedy approach works quite well in terms of optimality and runs much 
faster than other methods as the number of objects in clutter increases. 
Other greedy methods (multi-step, MCTS) take more time but also produce 
slightly more optimal solutions. 
\begin{figure}[ht!]
    \iffull\else\vspace*{-7mm}\fi
    \begin{center}
    %\resizex{
    \begin{overpic}[width=3in,tics=5]{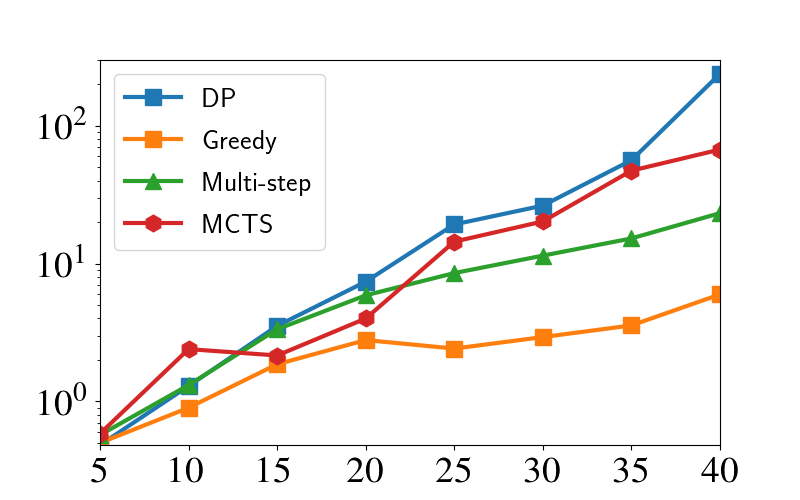}
    \put(-8,10){\rotatebox{90}{\scriptsize Computation Time}}
    \end{overpic}
		\hspace{5mm}
    \begin{overpic}[width=3in,tics=5]{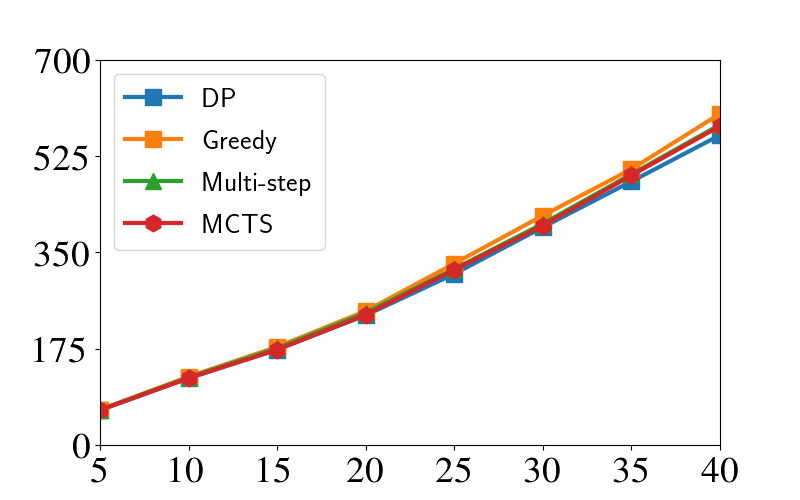}
    \put(-6,20){\rotatebox{90}{\scriptsize Total Cost}}
    \end{overpic}

    %}
    \end{center}
    \iffull\else\vspace*{-8mm}\fi
    \caption{\label{fig:algorithms} Computation times and total travel costs
    from multiple algorithms. 
		%No data point is shown for exhaustive search 
		%on $30$ objects as some cases took too long to run, exceeding the 
		%$400$s time limit.
		}
		\iffull\else\vspace*{-6mm}\fi
\end{figure}

\begin{wrapfigure}[6]{r}{2in}
\iffull\vspace*{-8mm}\else\vspace*{-8mm}\fi
\begin{overpic}[width=1.95in,tics=5]{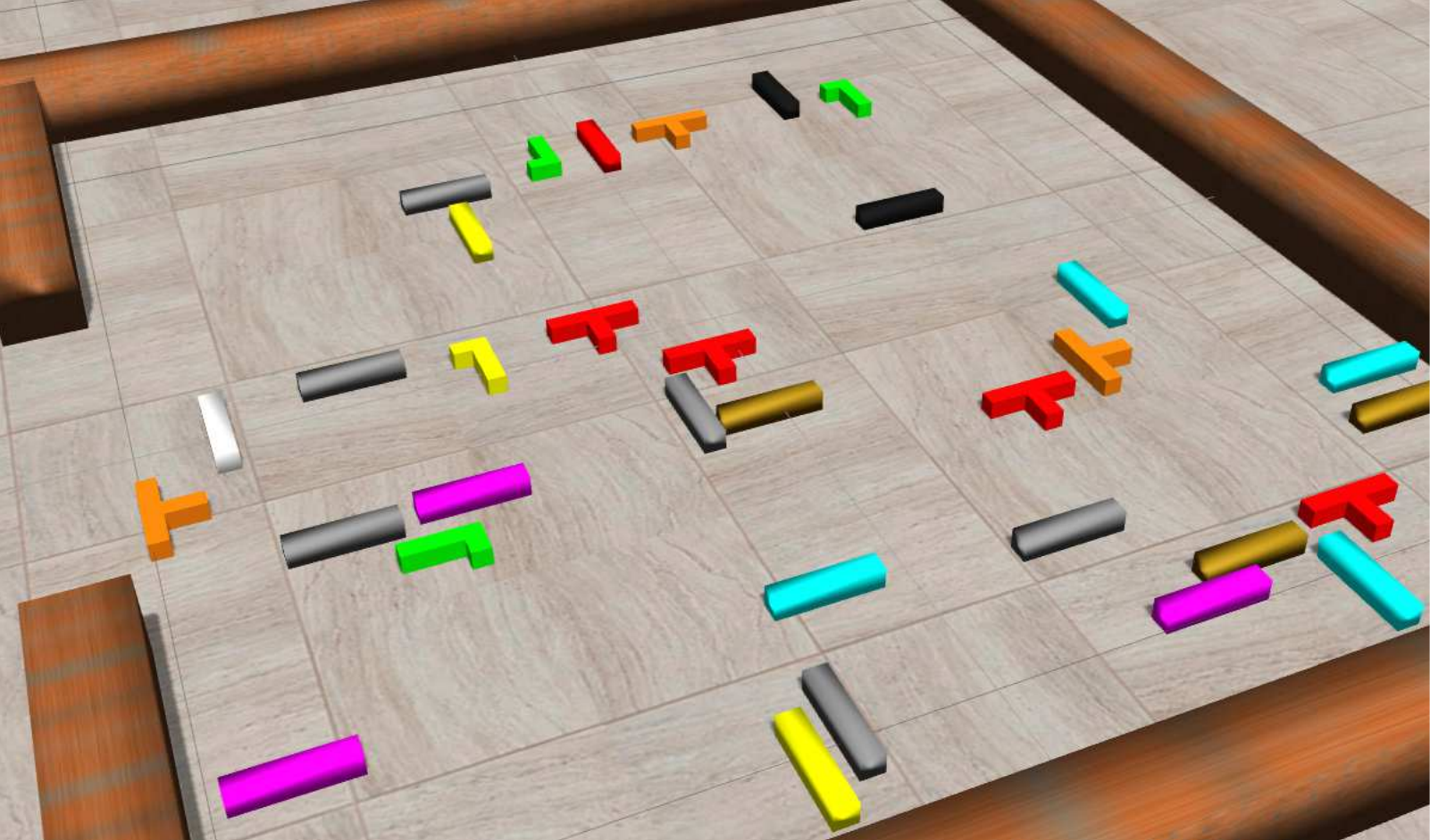}
\end{overpic}
\iffull\else\vspace*{-6mm}\fi
\end{wrapfigure}
Experiments were also carried out to evaluate the effect of two additional 
factors: obstacles and more complex shapes. A typical test case of 
the former is shown in Fig.~\ref{fig:scene} and a typical case for the 
later is illustrated on the right, with Tetris-like objects, for which 
grasp planning becomes more challenging. For both settings, results are
highly similar to what is shown in Fig.~\ref{fig:algorithms}.
%were obtained (omitted here due to space limit). 

\subsection{Multi-Exit Setup}
\iffull\else\vspace*{-1mm}\fi
For the multi-exit setup, we also attempted a number of experiments. 
Given the similarity to the single-exit case, little new insights were 
obtained in running the experiments similar to the single-exit case, 
except that the branching factor becomes larger due to the availability 
of more exits, which makes more objects accessible at once. A new set of 
experiments were also created to evaluate the effectiveness of the 
Voronoi partition based algorithm, which is compared with exhaustive 
search and the basic greedy algorithm without the Voronoi heuristic. The 
test cases are the single-exit \textbf{SRN} ones now with three exits 
selected randomly along the boundary $\partial \W$. The robot may travel 
along the outside of $\partial \W$ between exits. The experimental result 
is plotted in Fig.~\ref{fig:multi-exit}.
\begin{figure}[ht!]
	%\vspace*{2mm}
	\begin{center}
		%\resizex{
		\begin{overpic}[width=3in,tics=5]{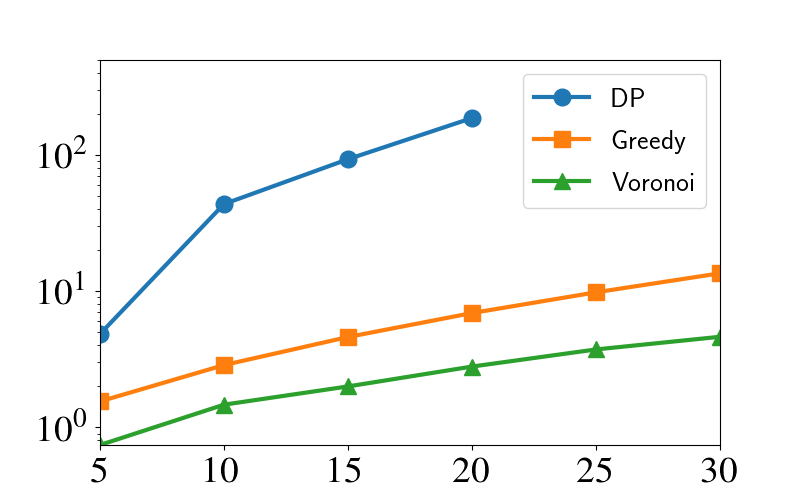}
    \put(-8,10){\rotatebox{90}{\scriptsize Computation Time}}
    \end{overpic}
		\hspace{5mm}
    \begin{overpic}[width=3in,tics=5]{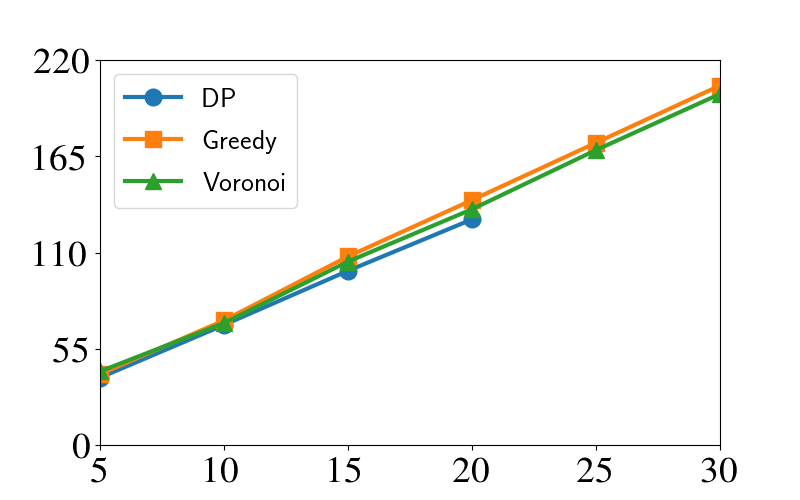}
    \put(-6,20){\rotatebox{90}{\scriptsize Total Cost}}
    \end{overpic}
    %}
  \end{center}
  \iffull\else\vspace*{-8mm}\fi
  \caption{\label{fig:multi-exit} Multi-exit experiments running exhaustive 
	with dynamic programming, greedy, and Voronoi partition based greedy algorithms.}
  \iffull\else\vspace*{-6mm}\fi
\end{figure}

Again, we observe that the total costs exhibit little difference among
the methods. However, the Voronoi based method demonstrates superior 
scalability, running much faster than the basic greedy algorithm and exhaustive 
search. At the same time, DP-based exhaustive search can effectively handle 
over 20 objects efficiently and provides slightly better cost than the 
greedy methods. 

\iffull\else\vspace*{-3mm}\fi
\section{Conclusion and Discussion}\label{section:conclusion}
\iffull\else\vspace*{-3mm}\fi
In this paper, we investigate the clutter removal problem (\crp), performing extensive 
structural and algorithmic studies for both single- and multi-exit cases.
After showing that the problem can be NP-hard to optimally solve, we
develop resolution-complete exhaustive search algorithms for \crp. With DP, the
algorithms are effective for both single and multiple exits. We 
also show that typical settings can be efficiently solved using greedy
algorithms, which have even better scalability and produce solutions 
that are fairly close to being optimal. Our algorithms are capable of 
computing high-quality solutions in seconds for scenes with tens of 
objects. A key conclusion from the empirical evaluation is that greedy 
approaches may be applied as a first resort; when there are additional 
computational resources, longer horizons may also be explored using more 
exhaustive approaches to further enhance plan optimality. 

\bibliographystyle{tex/spmpsci}
\iffull\else\begin{footnotesize}\fi
\bibliography{bib/bib,bib/jingjin}
\iffull\else\end{footnotesize}\fi

\iffull
\section*{Appendix: Hardness of Optimal Clutter Removal with a Single Exit}\label{section:complexity-se}
The proof for the single exit case, also done through the reduction 
from \mpsat, is considerably more involved. It requires both a 
careful analysis of the planar structure of \mpsat instances and 
significantly revised gadget design. For introducing the somewhat 
complex proof, we use a modified \mpsat instance illustrated in 
Fig.~\ref{fig:mpsat6}. The instance has variables $x_1$--$x_5$ and 
clauses $c_1 = x_1\vee x_2, c_2 = x_2 \vee x_3 \vee x_4, c_3 = x_1 
\vee x_4 \vee x_5, c_4 = \neg x_1 \vee \neg x_2 \vee \neg x_3$, 
$c_5 = \neg x_3 \vee \neg x_4 \vee \neg x_5$, and $c_6 = \neg x_1 
\vee \neg x_3 \vee \neg x_5$. We assume that we always work with a 
fixed planar embedding of a given \mpsat instance, which can always 
be arranged as illustrated in Fig.~\ref{fig:mpsat6}, where a 
horizontal chain of variables separates the positive clauses and 
negative clauses, with the positive (resp., negative) clauses 
staying above (resp., below) the chain of variables. 
In a general \mpsat instance, we let the number of variables be $n$ and 
the number of clauses be $m$. 
\begin{figure}[ht!]
\vspace*{-2mm}
\begin{center}
%\resizex{
\begin{overpic}[width=3in,tics=5]{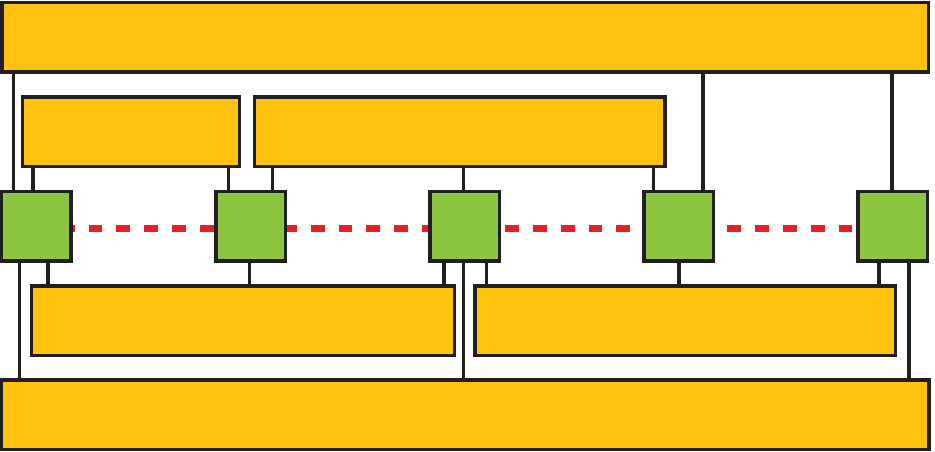}
\put(2,22.8){{\small $x_1$}}
\put(24.5,22.8){{\small $x_2$}}
\put(47,22.8){{\small $x_3$}}
\put(70.5,22.8){{\small $x_4$}}
\put(93,22.8){{\small $x_5$}}
\put(37,43){{\small $c_3: x_1 \vee x_4 \vee x_5$}}
\put(34,33){{\small $c_2: x_2 \vee x_3 \vee x_4$}}
\put(6.5,33){{\small $c_1: x_1 \vee x_2$}}
\put(7,12.5){{\small $c_4:\neg x_1 \vee \neg x_2 \vee \neg x_3$}}
\put(56,12.5){{\small $c_5:\neg x_3 \vee \neg x_4 \vee \neg x_5$}}
\put(37,2.8){{\small $c_6:\neg x_1 \vee \neg x_3 \vee \neg x_5$}}
\end{overpic}
%}
\end{center}
\vspace*{-2mm}
\caption{\label{fig:mpsat6}The planar embedding of an \mpsat instance with 
variables $x_1$--$x_5$ and clauses $c_1$--$c_6$.}
\vspace*{-2mm}
\end{figure} 

\subsection{The Planar Structure of \mpsat Instances}
For the type of embedding given in Fig.~\ref{fig:mpsat6}, we first make 
an observation of \mpsat instances regarding the relative arrangements 
of clauses that share the same literal. Without loss of generality we 
look at positive clauses that share some positive (non-negated) literal 
$x_i$. We say two clauses $c_j^1$ and $c_j^2$, both containing $x_i$, are 
{\em adjacent} if they are not separated by clauses or lines  
(connecting clauses and variables) in the embedding. For example, $c_1$ 
and $c_2$ are adjacent in Fig.~\ref{fig:mpsat6}; so are $c_1$ and $c_3$. 
Clauses $c_4$ and $c_5$ are not adjacent as they are separated by the 
connection between $x_3$ and $c_6$. For adjacent clauses $c_j^1$ and 
$c_j^2$, we say that $c_j^1$ {\em encloses} $c_j^2$ if $c_j^2$ is 
between $c_j^2$ and the variable chain. For example, $c_3$ encloses $c_1$ 
as $c_1$ falls between the chain of variables and $c_3$. 

If $c_j^1$ encloses $c_j^2$ and the connection between $x_i$ and $c_j^1$ 
appears to the right of the connection between $x_i$ and $c_j^2$ in 
the embedding, then we say $c_j^1$ {\em right encloses} $c_j^2$ due to 
$x_i$. We write this relationship as $c_j^1 \sr_{x_i} c_j^2$ 
(alternatively, $c_j^2 \pl_{x_i} c_j^1$), or simply $c_j^1 \sr c_j^2$ 
(alternatively, $c_j^2 \pl c_j^1$) as the variable $x_i$ can be readily 
inferred.The {\em left enclose} relationship is similarly defined with 
relation symbols $\sl$ and $\pr$. As examples, in Fig.~\ref{fig:mpsat6}, 
$c_3 \sl c_1$ (due to $x_1$), $c_3 \sr c_2$ (due to $x_4$), $c_6 \sl_{x_1} 
c_4$, $c_6 \sr_{x_3} c_4$, $c_6 \sl_{x_3} c_5$, and $c_6 \sr_{x_5} c_5$.
If $c_j^1$ and $c_j^2$ do not enclose one another, we say that $c_j^1$ and 
$c_j^2$ are {\em parallel}, written as $c_j^1 \parallel_{x_i} c_j^2$ or 
simply $c_j^1 \parallel c_j^2$ In Fig.~\ref{fig:mpsat6}, $c_1 \parallel c_2$. 

Let $c_j^1, \ldots, c_j^k$ be all the clauses containing the variable 
$x_i$ and are arranged from left to right in the given planar 
embedding. We observe that for $1 \le \ell < k$, $c_j^{\ell}$ and 
$c_j^{\ell +1}$ are adjacent, and $c_j^{\ell}$ is parallel 
to $c_j^{\ell +1}$, encloses $c_j^{\ell + 1}$, or is enclosed by 
$c_j^{\ell + 1}$. Moreover, 
if $c_j^{\ell} \parallel c_j^{\ell +1}$, then it must be the case that 
($c_j^{\ell}$ may be $c_j^1$ and $c_j^{\ell + 1}$ may be $c_j^k$)
\begin{align}\label{eq:r1}
c_j^1 \pl \ldots \pl c_j^{\ell} \parallel c_j^{\ell + 1} \sl \ldots \sl c_j^k.
\end{align}
It is also possible that ($c_j^{\ell}$ may be $c_j^1$ or $c_j^k$)
\begin{align}\label{eq:r2}
c_j^1 \pl \ldots \pl c_j^{\ell} \sl \ldots \sl c_j^k.
\end{align}

It is not hard to see that for any variable, either~\eqref{eq:r1} 
or~\eqref{eq:r2} must happen. 

\subsection{Gadget Design}
For the NP-hardness reduction, variable and clause gadgets are created. 
The clause gadget is relatively simple with an example given in 
Fig.~\ref{fig:clause-gadget} where the blue object represents the clause 
gadget for $c_2 = x_1 \vee x_2$. In the figure, the left and right partially 
drawn lime objects correspond to portions of the variable gadgets for 
$x_1$ and $x_2$, respectively. The clause object can be only be grasped 
from the top at one of the two thin handles indicated by the red arrows, 
after the corresponding lime object is removed first. In particular, 
it is not possible to remove the object from the lower side of the 
handles due to the long and thin gap, even after the lime objects are 
removed. We note that the figure is not drawn to scale; the relative 
dimensions of the gadget will be specified later. 
\begin{figure}[ht!]
\vspace*{-2mm}
\begin{center}
\begin{overpic}[width=2in,tics=5]{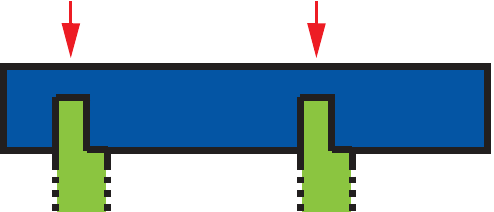}
\end{overpic} 
\end{center}
\vspace*{-2mm}
\caption{\label{fig:clause-gadget} The (blue) clause gadget for 
$c_1 = x_1 \vee x_2$. Figure is not drawn to scale.}
\vspace*{-2mm}
\end{figure} 

The variable gadget is more involved with several variations, depending 
on how the related clauses are arranged in the embedding of the \mpsat
instance. For a variable $x_i$, let $c_j^1, \ldots, c_j^k$ be all the 
positive clauses containing the positive literal $x_i$ and are arranged 
from left to right in the given planar embedding. 
%Let $\bar c_j^1, \ldots, \bar c_j^{k'}$
%be all negative clauses containing the literal $\neg x_i$ and are arranged 
%from left to right in the given planar embedding. 
The simplest case is when $c_j^1 \pl \ldots \pl c_j^k$ or 
$c_j^1 \sl \ldots \sl c_j^k$ holds and the same applies to the negative
clauses containing $\neg x_i$. In the given \mpsat instance, $x_1$ falls
into this category and the gadget for $x_1$ is given in 
Fig.~\ref{fig:variable-gadget}(a). The gadget contains a green positive 
{\em selector} object and an orange negative selector object that can only 
be lifted at the thin handles indicated by the red arrows. They can be 
lifted from either side of the handles when there are sufficient clearance
at a given side of a handle. Focusing on the positive side, there are two 
lime positive {\em connectors} that link the positive selector to (blue)
positive clause objects $c_3$ (top) and $c_1$ (bottom). These connectors 
can be removed from either ends by grasping the thin handles when they are 
exposed. In the figure, these thin handles are blocked by the selector 
object and clause objects. Similar construction are shown for the negative 
side with yellow negative connectors and cyan negative clauses (partially 
shown). 
\begin{figure}[ht!]
%\vspace*{-2mm}
\begin{center}
\begin{overpic}[width=6.5in,tics=5]{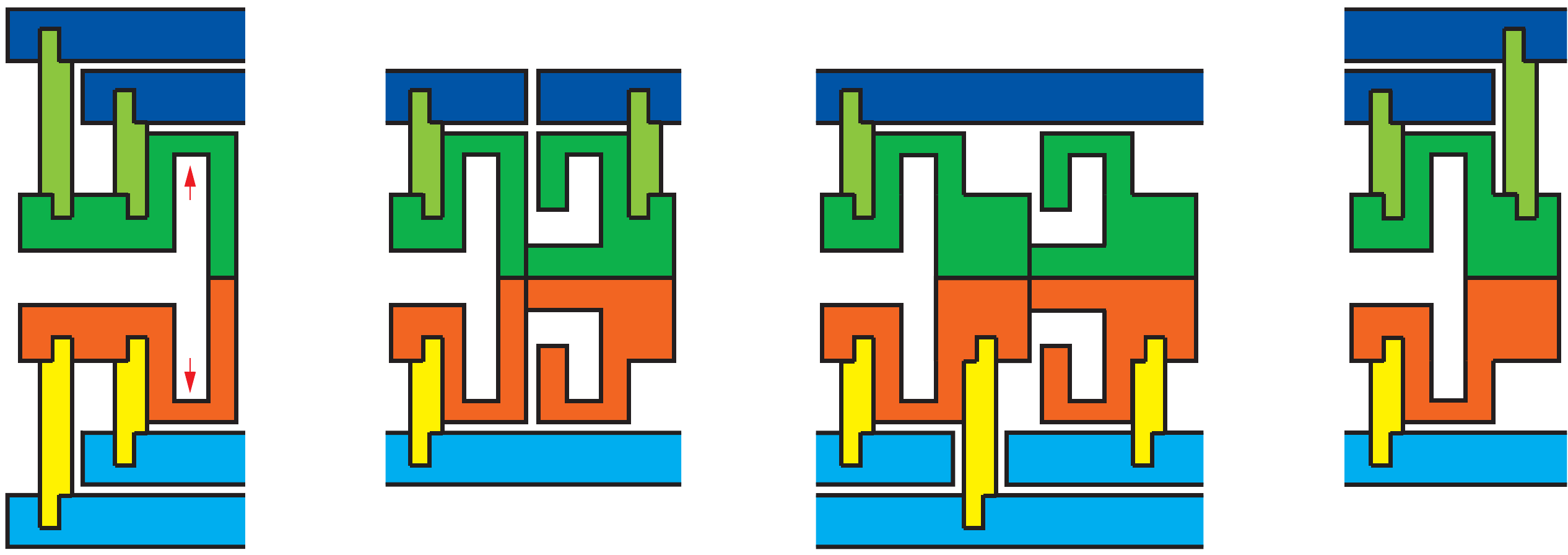}
\put(7,-3){{\small (a)}}
\put(32,-3){{\small (b)}}
\put(63,-3){{\small (c)}}
\put(91.5,-3){{\small (d)}}
\put(19,33){{\small $c_3$}}
\put(19,28.5){{\small $c_1$}}
\put(19,5.5){{\small $c_4$}}
\put(19,1.5){{\small $c_6$}}
\put(47,28.5){{\small $c_2$}}
\put(47,5.5){{\small $c_4$}}
\put(47,1.5){{\small $c_6$}}
\put(80,33){{\small $c_3$}}
\put(80,28.5){{\small $c_2$}}
\put(80,5.5){{\small $c_5$}}
\end{overpic} 
\end{center}
\vspace*{-2mm}
\caption{\label{fig:variable-gadget} Variable gadgets (excluding the 
blue and cyan clause gadgets) for (a) $x_1$. (b) $x_2$. (c) $x_3$ (d) 
$x_4$. The figures are not drawn to scale.}
\vspace*{-2mm}
\end{figure} 

We note that the top side of the thin handle on the connector is blocked 
by $c_1$ and subsequently by $c_3$, which reflects the fact of $c_3 \sl_{x_1}
c_1$. In general, if it is the case that $c_j^1 \pl \ldots \pl c_j^k$ for 
some variable $x_i$, then the corresponding selector is blocked by $c_j^1, 
\ldots c_j^k$ in that order. If instead $c_j^1 \sl \ldots \sl c_j^k$, then
the selector is blocked by $c_j^k, \ldots, c_j^1$ in that order (the 
positive part of the gadget for $x_5$ in Fig.~\ref{fig:variable-gadget}(d) 
shows how this case can be realized, which requires a ``split'' selector 
design to allow connector appear on two sides of the selector). 

A more complex case is when parallel clauses exist for a given variable, 
i.e., the scenarios described by~\eqref{eq:r1} happens. In this case, the 
corresponding variable gadget has four selectors, two positive and two 
negative. Variable $x_2$ in the given \mpsat instance falls into this 
case. The gadget for $x_2$ is shown in Fig.~\ref{fig:variable-gadget}(b), 
where there are two positive selectors, the top side of each is blocked by 
a different clause ($c_1$ and $c_2$ in this case). In the general case 
of~\eqref{eq:r1}, the left selector should be blocked by $c_j^1 \pl \ldots 
\pl c_j^{\ell}$ in that order and the right selector is blocked by $c_j^k, 
\ldots, c_j^{\ell + 1}$ in that order. For $x_2$, even though the negative 
part falls into the simplest case, we still need to create two selectors 
to mirror the structure on the positive side. We note that in the setup, 
for a robot to go from left to right, it must remove either both positive 
selectors or both negative selectors. Removing one positive selector and 
one negative selector (see  Fig.~\ref{fig:variable-gadget}(b)) will not 
allow the robot to pass through the gadget from left to right. 

A slight more involved case is when the scenario described by~\eqref{eq:r2} 
happens. This case may be viewed as~\eqref{eq:r2} with one additional 
variable-clause connection inserted in the middle. For the case, we need 
to further modify the previous variable gadget construction to add the 
connector for $c_j^{\ell}$ in the middle. Fig.~\ref{fig:variable-gadget}(c)
shows such a scenario for variable $x_3$, of which the negative part 
matches the setting described by~\eqref{eq:r2}. 

We again note that the figures are not drawn to scale. In particular, 
the vertical span of the selectors (the distance between the two thin
handles) and connectors is the dominating length; all other dimensions 
including the horizontal span of the gadgets, the width of the horizontal 
corridors between the non-connected parts of the positive and the negative 
selectors, are negligible in comparison. 

\subsection{Hardness Proof}
The full construction of the \crp instance for a the given \mpsat 
instance from Fig.~\ref{fig:mpsat6} is provided in 
Fig.~\ref{fig:reduction-se-te}. Beside objects from the variable 
and clause gadgets, the only additional object is the black border 
object which has the following utility: {\em (i)} it can only be 
removed by grasping the thin handle in the middle from the left, 
after one of the last variable gadget selector object is removed, 
and {\em (ii)} it must be removed before any clause gadgets can be 
removed. The red hexagons indicates three exits; the distance between 
the top exit and the middle exit is $w$. Same is true for the lower 
two exits. As mentioned previously, the figure is not drawn to scale. 
The main dimension is the vertical span $2w$, which comes mainly 
from the connectors and the selectors, i.e., they are very long and 
thin. All other dimensions, e.g., the horizontal span of the entire
construction, are negligible when compared with $w$. It is assumed 
that the robot is just small enough to travel through the corridors 
between the variable selector objects and then go up or down to remove 
a selector by grasping the horizontal thin handle. Because of the 
assumption that the objects are all very thin in all but one dimension, 
they can be transported through the exits when grasped at one end 
(the black object can be split into two to allow easy transportation). 
A more dimension accurate sketch of the \crp instance is provided in 
Fig.~\ref{fig:reduction-se-te-flat}. The robot starts at the middle exit.
\begin{figure}[ht!]
\begin{center}
%\resizex{
\begin{overpic}[width=6.5in,tics=5]{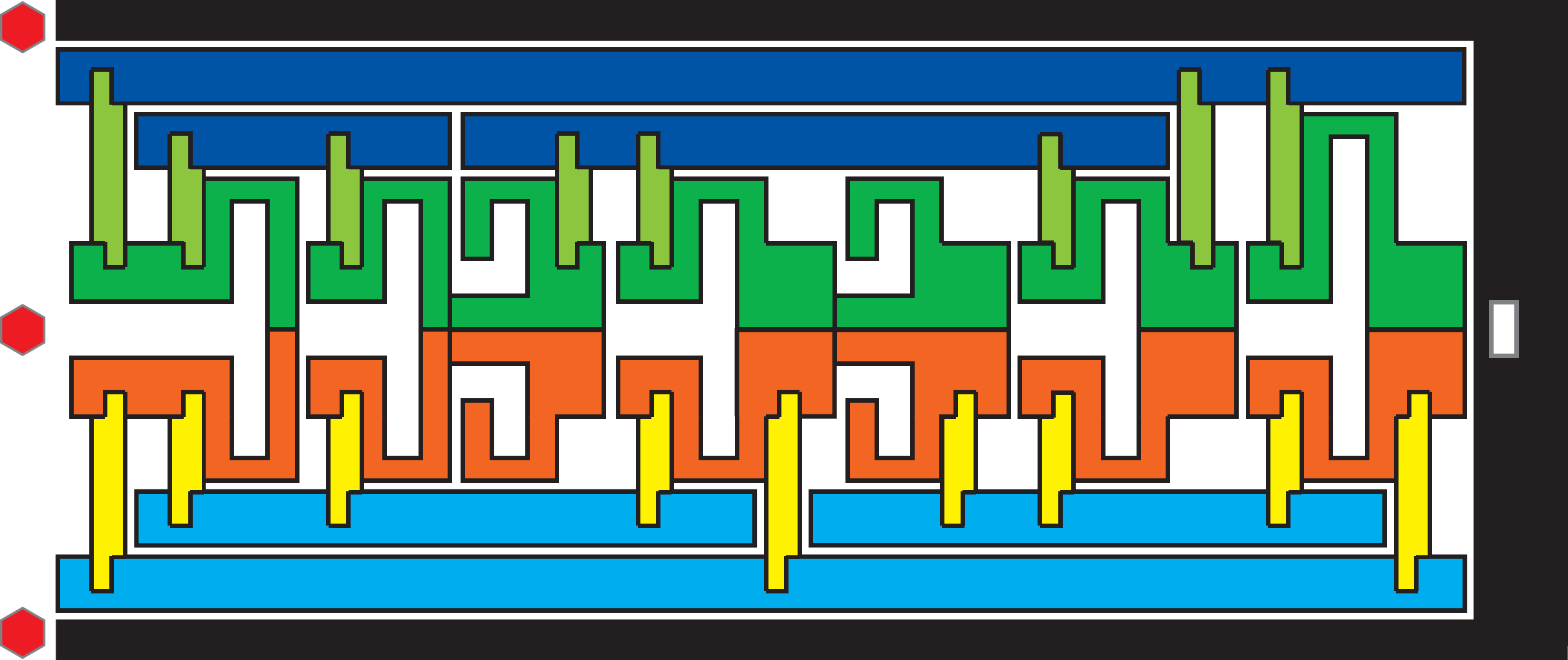}
%\put(50,13.3){{\small $2w_1$}}
%\put(22,13.3){{\small $w_2$}}
%\put(80,13.3){{\small $w_2$}}
\end{overpic} 
\end{center}
\caption{\label{fig:reduction-se-te} The \crp instance from the \mpsat 
instance given in Fig.~\ref{fig:mpsat6}. In the middle are the five variable 
gadgets for $x_1$--$x_5$, from left to right, with small vertical gaps 
in between. On the top are the three (blue) positive clause gadgets for 
$c_1$--$c_3$. On the bottom are the three (cyan) negative clause gadgets 
$c_4$--$c_6$. The variable and clause gadget positions largely mirror those
in the \mpsat instance. The three red hexagons mark the three exits. The 
robot is initially located at the middle exit. The figure is intended to 
show the necessary details and is not drawn to scale.}
\end{figure} 

\begin{figure}[ht!]
\begin{center}
%\resizex{
\begin{overpic}[width=6.5in,tics=5]{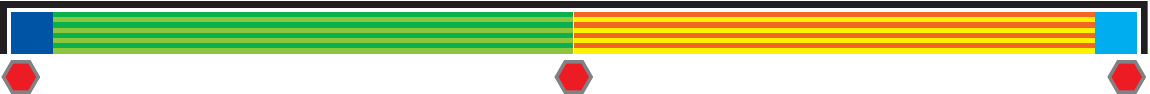}
%\put(50,13.3){{\small $2w_1$}}
%\put(22,13.3){{\small $w_2$}}
%\put(80,13.3){{\small $w_2$}}
\end{overpic}
%}
\end{center}
\vspace*{-3mm}
\caption{\label{fig:reduction-se-te-flat} A more accurate sketch
of the \crp instance from Fig.~\ref{fig:reduction-se-te}, rotated $\pi/2$ 
counterclockwise.}
\end{figure}

The \crp instance is clearly feasible: all selectors can be removed 
sequentially through the middle exit, after which all connectors can be 
removed. Then, the black object can be removed, following by the removal 
of all clause gadgets. We now show that the the minimum possible cost is 
$(2n' + 3)w$ where $2n'$ is the number of selectors created for the 
\crp instance with $n \le n' < 2n$ (recall that some variables gadgets 
have two selectors and some have four selectors). For a given \mpsat 
instance with a fixed embedding, $n'$ can be readily computed. 

\begin{lemma}\label{l:ocr-cost-te-se}
The \crp instance admits a minimum possible travel cost of $(2n' + 3)w$.
\end{lemma}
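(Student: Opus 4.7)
The plan is to prove the bound in two directions, paralleling the two-exit argument in Lemma~\ref{l:ocr-cost}. First I would construct an explicit removal sequence of cost $(2n'+3)w$ from any satisfying assignment of the associated \mpsat instance, and then I would argue that every feasible removal sequence must incur at least this much travel, making the bound tight.

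For the upper bound, starting at the middle exit and given a satisfying assignment, the robot performs a commitment sweep across the variable gadgets from left to right along the horizontal corridor at the middle-exit level. At each variable $x_i$, depending on its assigned value, the robot travels vertically by $w$ to grasp either all positive selector handles (near the top exit) or all negative selector handles (near the bottom exit), lifts the long, thin selectors, and transports them out through the corresponding exit. Exactly $n'$ selectors are removed in this sweep---one per simple-case variable and two per parallel-case variable---contributing $n'w$ of vertical travel. By the satisfying-assignment property, together with the $\pl,\sl$ ordering among clauses sharing each literal, every clause gadget now has at least one connector whose handle is exposed, so the connectors can be grasped and removed with negligible additional cost. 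A symmetric sweep in the reverse direction clears the remaining $n'$ uncommitted selectors for another $n'w$. Finally, the robot lifts the black border object (which required a last selector removed adjacent to its middle handle) and transports the clause objects out through the top and bottom exits; because all these objects are long and thin with their handles essentially on the horizontal line between the three exits, the remaining vertical traversal totals $3w$.

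For the lower bound, I would exploit the geometric rigidity of the construction: every selector handle lies at vertical distance at least $w$ from the horizontal middle corridor through which the robot must navigate between adjacent gadgets, so each of the $2n'$ selector removals contributes a charge of at least $w$. In addition, the black border object can only be grasped after at least one selector adjacent to its handle is cleared and must itself be removed before any clause object, and the positive (respectively negative) clause objects can only leave through the top (respectively bottom) exit. A short case analysis on the order in which the robot visits the top exit, the bottom exit, and the black-object handle forces at least $3w$ of residual vertical travel, matching the upper bound.

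The main obstacle I expect is the lower-bound charging step, specifically showing that clever scheduling cannot amortize vertical excursions across selector removals. The essential geometric fact is that the inter-gadget corridor is narrow enough that once the robot departs the middle-level corridor to grasp a handle at vertical distance $w$, it must return to that corridor before reaching a non-adjacent gadget or the black object; the long, thin selectors enforce this rigidity. Formalizing it cleanly, and using the $\pl,\sl$ structure of the fixed \mpsat embedding to rule out shortcuts that move between variable and clause work prematurely, will require a careful enumeration of admissible traversal patterns, which is the technical heart of the argument.
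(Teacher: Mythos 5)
Your two-direction outline matches the paper's, but the cost accounting in both directions is off, and the errors do not cancel by accident in a way you could defend. In the upper bound, the paper's schedule removes only the $n'$ \emph{committed} selectors before touching the black object, and each such removal costs $2w$, not $w$: the selector's usable handle sits at vertical distance $w$ from the middle corridor, and the object must be carried back out through the middle exit (the top and bottom exits are still sealed off by the black border object and the clause gadgets), so every one of these removals is a round trip. That already accounts for the entire $2n'w$ term. Your ``symmetric sweep in the reverse direction'' clearing the remaining $n'$ uncommitted selectors for another $n'w$ is not what happens and would not fit the budget: removed through the middle exit at this stage they would cost $2w$ each, giving $4n'w$ in total. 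In the paper's solution the uncommitted selectors (and leftover connectors) are removed for \emph{negligible} cost, and only \emph{after} the clause gadgets are cleared through the top and bottom exits, because only then are their outer handles exposed right next to those exits. Omitting this ordering dependence is not cosmetic; it is the reason the bound is achievable at all.

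The lower bound has the mirror-image problem. You charge ``at least $w$'' to each of the $2n'$ selector removals, but that charging scheme is unsound: the $n'$ uncommitted selectors are removed essentially for free in the optimal schedule (grasped at their outer handles while the robot is already at the top or bottom exit), so they cannot each carry a charge of $w$. The paper instead charges $2w$ to each of the $n'$ selectors that \emph{must} be removed to open the left-to-right corridor to the black object's handle --- the black object blocks all clause gadgets, its handle is reachable only after one selector from every top--bottom pair is cleared, and each such clearance is a forced round trip through the middle exit --- and then adds $3w$ for the unavoidable visits to the top and bottom exits. If you want to keep a per-selector charging argument, you would need to restrict it to the $n'$ corridor-opening selectors and justify the factor $2$ there, which is exactly the ``cannot amortize vertical excursions'' point you flag as the hard part but do not resolve.
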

\begin{proof}
First, given a feasible assignment to the \mpsat instance, we show 
that the \crp instance admits a solution with a total travel distance 
of $(2n' + 3)w$. Starting in the middle, if a variable $x_i$ is assigned 
to be positive (resp., negative), the robot removes the green (resp., 
orange) selector(s) from the $i$-th variable gadget from the left, 
which incurs a distance cost of $2w$ (one selector) or $4w$ (two positive 
selectors or two negative selectors) per variable. Doing this for all $n$ 
variable gadgets costs $2n'w$. All the exposed connectors can be 
removed from the middle with negligible cost. 
Then, the black border object can be removed with negligible cost (recall
that the horizontal span of Fig.~\ref{fig:reduction-se-te} is negligible 
in comparison to the vertical span $w$). 

At this point, because the assignment is feasible, every clause gadget 
object has at least one thin handle exposed. Next, the robot travels to 
the top (with a cost of $w$) and removes all (blue) positive clause 
gadgets through the top exit with negligible cost. Afterward, if there 
are positive parts (selectors and connectors) from variable gadgets, 
these can be removed as well through the left exit with negligible cost, 
because all positive connectors and selectors are exposed on the top 
after all positive clause gadgets are removed. Finally, the robot travels 
to the bottom (with a cost of $2w$) and remove everything that is left in 
a similar manner. The grand total is $(2n' + 3)w$.

Next, we show the cost is minimal. Because the removal of clause objects 
can only be done after the black object is removed, we have to first 
reach the black object. This is only possible by going from left to right 
and remove one selector from each top-bottom pair of selectors. This incurs 
a cost of $2n'w$. Then the robot must travel to top and then bottom at 
least once each, incurring an additional cost of $3w$. ~\qed
\end{proof}

\begin{theorem}\label{t:ocr-np-hard-te-se}
Planar optimal clutter removal is NP-hard for three exits. 
\end{theorem}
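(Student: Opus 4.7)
The plan is to follow the template of Theorem~\ref{t:ocr-np-hard}: Lemma~\ref{l:ocr-cost-te-se} already supplies the ``satisfying assignment $\Rightarrow$ CRP solution of cost $(2n'+3)w$'' direction, so I only need to argue the converse. Assume the constructed \crp instance admits an optimal removal sequence of total travel cost $(2n'+3)w$; I will extract a satisfying assignment for the original \mpsat instance.

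First I would analyze the forced macro-structure of any such optimal sequence. Since the black border object can only be grasped after some last-variable selector is removed, and since no clause object can be removed before the black object, the robot is constrained to a three-phase motion: (i) traverse the variable row from the middle exit, removing selectors until the black object becomes graspable; (ii) exit via the top; (iii) exit via the bottom (or top then bottom in the other order). Any deviation forces an extra vertical trip of length $2w$ or a backtrack of length $\geq 2w$ through the corridor, which together with the mandatory $2n'w$ needed to expose the black handle would exceed $(2n'+3)w$. In particular, at each variable gadget the robot may remove either only the green (positive) selector(s) or only the orange (negative) selector(s), but not one of each: a mixed choice leaves the horizontal corridor blocked, forcing a backtrack whose cost again breaks the budget. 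Define the assignment $\sigma$ by setting $x_i$ to true if the positive selectors of the $i$-th gadget are removed, false otherwise.

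Second I would verify that $\sigma$ satisfies every clause. After phase (i), the budget is essentially spent: the robot has exactly $3w$ left for vertical travel and only a negligible amount for horizontal movement along the top and the bottom. At the top exit, a positive clause object $c$ can be lifted only through one of the thin handles whose blocking connector has already been removed, which happens iff at least one variable $x_i \in c$ had its positive selectors removed in phase (i), i.e., iff $\sigma(x_i)=\mathrm{true}$. If some positive clause has no exposed handle, the robot cannot remove it from the top exit and would have to re-enter the interior to expose one, incurring at least an additional $2w$ of vertical travel and breaking the $(2n'+3)w$ bound. Hence every positive clause is satisfied, and a symmetric argument at the bottom shows every negative clause is satisfied, so $\sigma$ satisfies the \mpsat instance. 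Since verifying a given ordering's cost is polynomial, the problem lies in NP, so it is NP-complete.

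The main obstacle I anticipate is the case analysis of the three variable gadget variants from Fig.~\ref{fig:variable-gadget}: for gadgets with two selectors on each side (arising from a parallel relation as in \eqref{eq:r1}) or with an inserted middle connector (as in \eqref{eq:r2}), I must argue carefully that the left-to-right corridor traversal forces the robot to remove \emph{all} selectors on a chosen side, and that the order in which the connector handles are blocked by the clause objects faithfully encodes the $\pl,\sl,\parallel$ relations, so that ``a handle of clause $c$ is exposed at the top'' is exactly equivalent to ``some literal of $c$ is true under $\sigma$.'' This bookkeeping, together with the quantitative slack argument that ties any deviation to a $\geq 2w$ overrun (using $w$ dominating all horizontal dimensions), is the crux; the rest is a straightforward adaptation of the proof of Theorem~\ref{t:ocr-np-hard}.~\qed
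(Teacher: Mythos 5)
Your proposal is correct and follows essentially the same route as the paper's proof: decompose the budget $(2n'+3)w$ into $2n'w$ for exposing the black object by removing one selector per top--bottom pair and $3w$ for the forced vertical trips, conclude the robot visits the top and bottom exits at most once each, and hence that all positive (resp.\ negative) clause objects must already have an exposed connector handle, which reads off a satisfying assignment. The extra bookkeeping you flag for the multi-selector gadget variants is a reasonable elaboration but not a departure from the paper's argument.
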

\begin{proof}
The proof of Lemma~\ref{l:ocr-cost-te-se} already shows that a solution 
to the \mpsat instance leads to a \crp solution of cost $(2n'+3)w$; we 
only need to prove the other direction. 
Assume that the constructed \crp problem has an optimal solution with a 
total cost of $(2n'+3)w$, we note that as the robot starts in the middle, 
it has to travel to top and bottom at least once to be able to remove the 
clause objects; this incurs a cost of $3w$. This leaves a budget of $2n'w$, 
which is the least possible for removing $n'$ selectors and then the 
black object. In turn, this means that the robot can travel to top and 
bottom at most once each. Therefore, it must remove all positive clause
gadgets during the single trip to the top (after removing the black object) 
and must remove all negative clause gadgets during the single trip to 
the bottom. Now, to be able to remove all positive (resp., negative) 
clause gadgets, one of the connectors to each of the positive (resp., 
negative) clause gadgets must be removed first, which can only be done 
after the corresponding variable selectors have been removed. Therefore, 
the set of $n'$ selectors that are removed must yield a satisfiable 
solution to the \mpsat instance. ~\qed
\end{proof}

Since it is easy to verify whether a given solution is optimal, planar 
\crp is also in NP. Therefore, optimal clutter removal is NP-complete. 

Using largely the same construction with minor modifications and
additional reasoning (e.g., we have not used all properties from the 
gadget design), we can show that optimally solving \crp is NP-complete
for a single exit. To do this, we only need to show the problem is NP-hard.

\begin{theorem}
Planar optimal clutter removal is NP-hard for the single exit case. 
\end{theorem}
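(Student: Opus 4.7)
The plan is to adapt the three-exit construction from Theorem~\ref{t:ocr-np-hard-te-se} by collapsing the three exits into the single middle exit, while retaining the variable and clause gadgets of Fig.~\ref{fig:variable-gadget} and Fig.~\ref{fig:clause-gadget} and the overall layout of Fig.~\ref{fig:reduction-se-te}. The top and bottom exits are removed, so every object must ultimately leave the workspace through the single middle exit. The hope is that the same selector/connector blocking structure still forces the robot's schedule to encode a variable assignment, and the single-exit constraint simply changes the cost accounting for clause removal.

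First, I would compute a tight lower bound on the travel cost for the modified CRP instance. In the three-exit case, the bound $(2n'+3)w$ was achieved by a left-to-right sweep through the variable corridor (cost $2n'w$) followed by a one-way traversal from middle to top to bottom (cost $3w$), with clause removals happening at the endpoints at negligible extra cost. In the single-exit case, each clause object, located near the top or bottom at distance approximately $w$ from the middle exit, must instead be carried back to the middle. Since the robot carries only one object at a time, these trips cannot be amortized: removing $m$ clause objects requires at least $m$ distinct round-trip excursions of cost at least $2w$ each. Combined with the $2n'w$ selector cost and a $\Theta(1)$ term for connectors and the black border, this yields a minimum achievable cost of approximately $(2n'+2m)w$.

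Next I would show that this minimum is attained exactly when the MPSAT instance is satisfiable. The forward direction mirrors the three-exit argument of Lemma~\ref{l:ocr-cost-te-se}: a satisfying assignment prescribes exactly $n'$ selectors to remove, after which the black object can be taken out and each clause has an exposed thin handle, liftable via a single $2w$ round trip. For the converse, given an optimal solution meeting the bound, a tight budget argument forces exactly $n'$ selectors to be removed (the number needed to traverse the variable corridor and enable black-border removal) and each of the $m$ clauses to consume at most one $2w$ round trip. Since removing additional selectors to expose further clause handles would strictly exceed the $2n'w$ component of the budget, the chosen selector subset must leave every clause with some exposed handle, yielding a satisfying assignment for the MPSAT instance.

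The main obstacle will be rigorously ruling out cost-saving schemes that interleave selector and clause operations or otherwise blur the clean $2n'w + 2mw$ decomposition. For instance, one must show the robot cannot profitably grasp a clause mid-sweep and route the return through the middle exit in a way that undercuts the lower bound; one must also rule out the possibility that by removing slightly more than $n'$ selectors, the robot saves travel elsewhere (for example by exposing handles that enable shorter clause excursions). This is likely where the ``additional reasoning'' and unused gadget properties alluded to by the authors come in: I would exploit the fact that each selector is a long thin object (length $\Theta(w)$) whose removal forces the robot to already commit to full vertical traversal, so any ``detour'' through the clause region costs at least $2w$ and cannot be charged against unused selector budget. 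A careful case analysis over the possible orderings of selector and clause operations should then close the loop and confirm that the $(2n'+2m)w$ bound is matched if and only if MPSAT is satisfiable.
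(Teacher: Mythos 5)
Your layout choice already diverges from the paper's: you merge the three exits into the \emph{middle} one and keep the geometry of Fig.~\ref{fig:reduction-se-te}, whereas the paper ``folds'' the construction at the middle (Fig.~\ref{fig:reduction-se-te-fold}) and places the single exit at one \emph{end}, arriving at a bound of $(4n'+2m+2)w$ rather than your $(2n'+2m)w$. Your forward direction and the per-clause $2w$ lower bound are plausible. The genuine gap is in the converse, and it sits exactly where you wave at ``a careful case analysis.'' The danger is not a mid-sweep detour; it is the \emph{cascade}: the robot removes an arbitrary set of $n'$ selectors, takes out the black object, and then alternates between removing connectors, clause objects, and newly exposed selectors, so that clauses get cleared even though the initial selector choice satisfies nothing. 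Your rebuttal --- that extra selector removals ``strictly exceed the $2n'w$ component of the budget'' --- presupposes the clean $2n'w + 2mw$ decomposition that the cascade is precisely trying to evade, and each clause round trip in the cascade costs the same mandatory $2w$ it would cost anyway. What actually kills the cascade in the paper is a structural \emph{mutual lock}: an unsatisfied clause $c_j$ blocks the outer handle of the selector for each of its variables, that selector blocks the connector, and the connector blocks $c_j$'s handle; the only way to break the cycle is to grasp a selector at its \emph{inner} handle, and the paper's folded layout guarantees this costs an extra $6w$ because the inner handles are far from the end exit. You never identify this circular dependency, and without it the converse does not close.

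Worse, your choice to keep the exit in the middle may make the lock \emph{cheap} to break: in the unfolded layout the selectors' inner handles sit along the central corridor, i.e., right next to your single exit, so the deadlock-breaking removal could incur negligible travel and the reduction would collapse. This is presumably why the authors fold the construction before collapsing to one exit --- it pushes the inner handles to distance $\Theta(w)$ from the exit and makes the penalty quantifiable. To repair your version you would either have to fold as the paper does, or re-engineer the gadgets so that the inner handles are provably expensive to reach from the middle exit; as written, the proposal is missing the key idea of the paper's single-exit argument.
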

\begin{proof} 
After first reducing an \mpsat instance to a \crp instance with three 
exits, we note that the structure, as sketched in 
Fig.~\ref{fig:reduction-se-te-flat}, can be ``folded'' at the middle 
so that the right side is folded over to be above the left side, as 
illustrated in Fig.~\ref{fig:reduction-se-te-fold}. For this instance, 
we set the only exit to be on the lower left as shown. The only 
important distance here is the horizontal span of the construction, which 
we assume to be $w$. 

\begin{figure}[ht!]
\begin{center}
%\resizex{
\begin{overpic}[width=5in,tics=5]{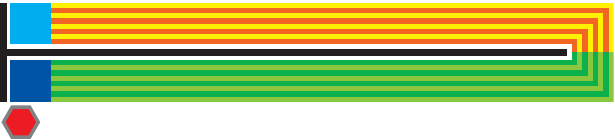}
%\put(50,13.3){{\small $2w_1$}}
%\put(22,13.3){{\small $w_2$}}
%\put(80,13.3){{\small $w_2$}}
\end{overpic}
%}
\end{center}
\caption{\label{fig:reduction-se-te-fold} A ``folded'' \crp instance based 
on Fig.~\ref{fig:reduction-se-te-flat} (and Fig.~\ref{fig:reduction-se-te}).}
\end{figure} 

We claim that the minimum possible cost associated with the instance is now
$(4n' + 2m + 2)w$ when there is a satisfiable \mpsat solution. To see that 
this is true, we may first remove $n'$ selectors based on the \mpsat 
solution, which incurs a cost of $4n'w$ (instead of $2n'w$ before because
the only exit is at the left). Following this, $m$ connectors can be 
removed from the right that will expose a thin handle on each clause 
gadget object (cost: $2mw$). After removing the black object (cost: $2w$),
the robot can then remove all clause objects from the left, incurring 
negligible additional cost. Then, the rest of the connectors can be 
removed from the left with negligible additional cost. Any leftover 
selectors can also be removed by grasping them from the left side, adding
negligible cost. The total cost is then $(4n' + 2m + 2)w$.

Showing that a solution with cost $(4n' + 2m + 2)w$ implies a solution 
to the \mpsat is a bit more complex. The key properties used in proving 
the three-exit case remain, i.e., $n'$ variable selectors must first be 
removed before the black object can be removed; the clause gadgets can 
only be removed after the black object is removed. 

These properties are however insufficient for establishing NP-hardness 
for the single exit case. This is true because in the three-exit case, 
traveling between the different exit incur additional costs, which can 
be exploited to demonstrate hardness in finding optimal solutions. This 
is no longer the case for a single exit: it may be possible that the 
robot first removes a random set of $n'$ selectors (cost: $4n'w$) to allow 
it to remove the black object (cost: $2w$). Then, the robot may 
``iteratively'' remove connectors (from right) and then the clause objects 
on the left which takes negligible cost. Such removals may expose the 
handles of some selectors, which can be removed from the left. This could 
subsequently lead to the removal of connectors and clauses even though the
clauses are not satisfied by the initial $n'$ selector choices. 

To show that the iterative removal of the objects is not possible with 
a cost $(4n' + 2m + 2)w$, we need to show that if the initial $n'$ selector 
that are removed before removing the black object does not yield a 
satisfiable assignment, then some clause gadgets cannot be removed 
without removing additional selectors from the inner side of the selector
handles, which adds extra cost. Without loss of generality, let a positive 
clause $c_j = x_j^1 \vee x_j^2 \vee x_j^3$ be such a clause, i.e., the 
positive selectors for $x_j^1$--$x_j^3$ are not among the first $n'$ 
selectors that are removed. If the clause gadget for $c_j$ is to be removed, 
then one of the three connectors to it must first be removed. This 
requires the removal of at least one of the positive selectors for 
$x_j^1$--$x_j^3$. Without loss of generality, suppose at some point 
the clause gadget for $c_j$ can be removed due to the removal of 
the positive selector of $x_j^1$ (let us consider for now the case 
where there is a single positive selector for $x_j^1$). To be able 
to remove the selector from the outer/left side, all clauses blocking
$x_j^1$ must first be removed. However, $c_j$ is such a clause that 
blocks the selector. This forms a mutual lock between $c_j$ and 
the positive selector for $x_j^1$, preventing either from being 
removed (without incurring an extra cost of $6w$, from removing a 
selector and a connector). 
For the case where $x_j^1$ have two positive selectors, we only need 
to pay attention to the selector that affects $c_j$. The same reasoning 
then again applies.

In other words, if we let $c_j^1, \ldots, c_j^k$ be the set of positive
clauses whose corresponding variables $x_i^1, \ldots, x_i^q$ are not 
among these variables whose positive selectors are removed. Then $c_j^1, 
\ldots, c_j^k$ and $x_i^1, \ldots, x_i^q$ are mutually dependent in the 
sense that if the robot does not remove some positive selectors of 
$x_i^1, \ldots, x_i^q$ from the inner/right side (which incurs additional 
cost of $6w$ per removal of selector and connector), none of them can 
be removed. Therefore, to realize a total cost of $(4n' + 2m + 2)w$,
the \crp solution must yield a satisfactory assignment for the \mpsat 
instance. ~\qed
\end{proof}

\else
\fi

\end{document}